\documentclass[11pt]{article}

\usepackage[margin=1in]{geometry}
\usepackage[T1]{fontenc}
\usepackage[utf8]{inputenc}
\usepackage{lmodern}
\usepackage{microtype}
\usepackage{amsmath,amssymb,amsfonts,amsthm,mathtools}
\usepackage{graphicx}
\usepackage{tikz}
\usetikzlibrary{arrows.meta,positioning,fit,backgrounds,calc}
\usepackage{booktabs}
\usepackage{enumitem}
\usepackage{array}
\usepackage[numbers,sort&compress]{natbib}
\usepackage[colorlinks=true,linkcolor=blue,citecolor=blue,urlcolor=blue,pdfauthor={},pdftitle={Attention-Limited Reward Learning: Information-Theoretic Limits of Human Comparisons}]{hyperref}

\setlist[itemize]{leftmargin=1.7em,itemsep=0.25em,topsep=0.25em}
\setlist[enumerate]{leftmargin=1.7em,itemsep=0.25em,topsep=0.25em}
\allowdisplaybreaks

\newtheorem{theorem}{Theorem}
\newtheorem{proposition}{Proposition}
\newtheorem{lemma}{Lemma}
\newtheorem{corollary}{Corollary}
\newtheorem{definition}{Definition}
\newtheorem{example}{Example}

\newtheorem{remark}{Remark}

\newcommand{\R}{\mathbb{R}}
\newcommand{\E}{\mathbb{E}}
\newcommand{\Prb}{\mathbb{P}}

\newcommand{\calP}{\mathcal{P}}

\newcommand{\calX}{\mathcal{X}}
\newcommand{\calY}{\mathcal{Y}}
\newcommand{\calZ}{\mathcal{Z}}
\newcommand{\calR}{\mathcal{R}}
\newcommand{\ThetaSet}{\Theta}
\newcommand{\MI}{\mathrm{I}}
\newcommand{\KL}{D_{\mathrm{KL}}}
\newcommand{\TV}{\mathrm{TV}}
\newcommand{\Ent}{\mathsf{H}}
\newcommand{\sig}{\sigma}
\newcommand{\one}{\mathbf{1}}
\newcommand{\eps}{\varepsilon}
\newcommand{\ellcyc}{\ell^{\mathrm{cyc}}}
\newcommand{\ellpot}{\ell^{\mathrm{pot}}}
\DeclareMathOperator{\logit}{logit}
\DeclareMathOperator{\diag}{diag}

\DeclareMathOperator{\Var}{Var}
\DeclareMathOperator{\Bern}{Bernoulli}

\DeclareMathOperator*{\argmax}{arg\,max}

\title{Attention Limited Reward Learning}
\author{Wenqian Xing\thanks{Management Science and Engineering Department, Stanford University, wxing@stanford.edu}}
\date{\today}

\begin{document}
\maketitle
% \vspace{-4.0em}

\begin{abstract}
Pairwise human comparisons are a primary interface through which modern AI systems learn human preferences. RLHF and related alignment pipelines typically model such comparisons with Bradley--Terry log-odds, where choice probabilities are governed by latent reward differences. This paper examines what this assumption misses through a reduced-form model motivated by rational inattention, in which each label is generated by a low-capacity evaluation channel. The model separates two forms of ambiguity that standard reward modeling tends to conflate: a comparison may be difficult because the two candidates are genuinely close in value, or because the relevant distinction is hard to detect under limited attention.
We show that limited attention can fundamentally distort what pairwise comparisons reveal. In particular, passive comparison data cannot generally distinguish reward, attention, and default tendencies, and heterogeneous attention can make standard Bradley--Terry reward modeling recover misleading rankings. Our analysis shows that learning is governed not by the raw number of labels, but by the amount of attended information each label carries. A case study on human votes over language-model pairs from Chatbot Arena exhibits the predicted signature, a cyclic component of the comparison data that exceeds sampling noise and that no scalar reward can represent; a second case study on perceptual comparisons shows that response times and gaze carry gap information that the labels do not. This perspective suggests that human feedback should be treated not as direct revealed preference, but as an attention-limited measurement process: a weak preference signal may reflect hidden evaluation difficulty rather than genuine indifference.
\end{abstract}

\section{Introduction}

Human feedback has become a central part of how modern AI systems are trained and evaluated. In reinforcement learning from human feedback (RLHF), preference-based policy optimization, and related alignment pipelines, the basic measurement primitive is often simple. A human is shown two candidate responses, rankings, plans, allocations, or trajectories and asked which one is better. A reward model is then trained so that its pairwise differences predict these choices. This template underlies much of contemporary reward learning for language models and agentic systems \citep{wirth2017survey,christiano2017deep,ziegler2019finetuning,stiennon2020learning,ouyang2022training}. Direct preference optimization changes the downstream policy update, but it still relies on the same basic comparison signal linking human choices to reward differences \citep{rafailov2023direct}.

The standard statistical abstraction is the Bradley--Terry, or conditional-logit, model \citep{bradley1952rank,luce1959individual,mcfadden1974conditional}. For a query consisting of a context $x$ and two candidates $y^0,y^1$, it assumes
\begin{equation}
\Prb[y^1 \succ y^0\mid x]
=
\sig\bigl(r(x,y^1)-r(x,y^0)\bigr),
\qquad
\sig(t)=\frac{1}{1+e^{-t}} .
\label{eq:bt-intro}
\end{equation}
Under this model, comparison labels are noisy but direct measurements of a single latent reward. If two candidates are chosen at roughly equal rates, the model interprets this as evidence that their rewards are nearly equal. Much of the statistical ranking literature studies estimation and ranking in this correctly specified regime \citep{negahban2017rank,shah2017simple}.

For alignment, however, the most important comparisons are often not the easiest ones. A response may be fluent but subtly misleading. A plan may appear helpful while creating long-run risks. A model output may satisfy the literal request while violating the user's underlying intent. In such cases, the relevant distinction is not necessarily small; it may simply be hard to notice. This observation is the premise of work on scalable oversight, which anticipates that as systems become more capable, the outputs whose evaluation matters most are the ones that strain unaided human judgment \citep{amodei2016concrete,leike2018scalable,irving2018ai,bowman2022measuring}. Human evaluators operate with limited time, limited context, and limited cognitive bandwidth. They attend to surface features that are cheap to check, such as fluency, length, format, and confidence, and can miss properties that require costly deliberation, a pattern documented empirically in reward models that absorb such surface regularities \citep{singhal2023length,gao2023scaling}. This is not just random noise around a fixed reward. It is a structured measurement problem.

We model this problem using an attention-scaled reduced form motivated by {\em rational inattention}. A rationally inattentive evaluator allocates scarce information-processing effort before making a choice \citep{sims2003implications,matejka2015rational,caplin2015revealed,fosgerau2020discrete}. Some comparisons are cheap to evaluate because one answer is incoherent, one plan obviously fails, or one candidate plainly violates an instruction. Others require careful reasoning about factual dependencies, safety implications, long-horizon consequences, or subtle forms of misalignment. A {\em Shannon rational-inattention first-order condition}, derived in Section~\ref{sec:model}, motivates a marginal comparison channel of the form
\begin{equation}
\ell_z=\eta_z+\beta_z\Delta_z^*,
\label{eq:intro-channel}
\end{equation}
where $\Delta_z^*$ is the {\em deliberative reward difference} for query $z$, $\beta_z\ge0$ is an {\em attention multiplier}, and $\eta_z$ is a default, order, prior, or salience term. Thus the observed label is not just a noisy draw from the reward difference. It is the output of an {\em attention-limited channel} whose strength varies across queries.

This channel highlights a distinction that is central to alignment. A comparison can be \emph{hard because close}, in that the two candidates genuinely have similar deliberative value. A comparison can also be \emph{hard because hidden}, in that the reward-relevant difference is large but the evidence needed to see it is costly or non-salient, so the observed choice probability remains near $50$--$50$. Standard Bradley--Terry modeling treats both cases as small reward gaps. For alignment this conflation is dangerous, because the hard-because-hidden case is where subtle misalignment lives. A weak preference signal need not mean that two outputs are equally good; it may mean that the evaluation protocol failed to elicit the information needed to distinguish them.

\paragraph{Contributions.}
This paper develops the theoretical implications of treating comparisons as attention-limited measurements. 
The argument proceeds in three steps, each answering a question the previous step raises. 
Figure~\ref{fig:overview} previews the central failure the paper isolates, i.e., attention-limited comparisons whose every pairwise majority is correct drive the Bradley--Terry fit to the wrong ranking.

\begin{figure}[t]
\centering
\begin{tikzpicture}[
  font=\footnotesize,
  box/.style={draw=black!55, rounded corners=2.5pt, line width=0.5pt},
  flow/.style={-{Stealth[length=2.4mm]}, line width=0.7pt, draw=black!60},
  edge/.style={-{Stealth[length=2.0mm]}, line width=0.6pt, draw=black!70},
  lbl/.style={font=\scriptsize, text=black!65, align=center},
  ttl/.style={font=\footnotesize\bfseries, text=black!80},
  vert/.style={circle, draw=black!70, fill=white, inner sep=1.1pt,
               font=\scriptsize\bfseries},
  dot/.style={circle, fill=black!75, inner sep=1.4pt}
]
% ---- (1) deliberative reward ------------------------------------------------
\node[ttl] (t1) at (0.55,2.20) {$R^*$};
\draw[black!40, line width=0.5pt] (0,-0.15) -- (0,1.90);
\node[dot] (rA) at (0,1.60) {};
\node[right=2pt of rA, font=\footnotesize] (rAl) {$A$~($2$)};
\node[dot] (rB) at (0,0.80) {};
\node[right=2pt of rB, font=\footnotesize] (rBl) {$B$~($1$)};
\node[dot] (rC) at (0,0) {};
\node[right=2pt of rC, font=\footnotesize] (rCl) {$C$~($0$)};
\begin{scope}[on background layer]
\node[box, fill=black!3, inner sep=8pt, fit=(t1)(rA)(rC)(rAl)(rCl)] (truth) {};
\end{scope}

% ---- (2) observed log odds ----------------------------------------------------
\node[ttl] (t2) at (6.50,2.35) {$\ell$};
\node[vert] (vA) at (6.50,1.85) {$A$};
\node[vert] (vB) at (5.35,0.05) {$B$};
\node[vert] (vC) at (7.65,0.05) {$C$};
\draw[edge, line width=1pt] (vA) -- (vB)
  node[midway, lbl, left=2pt] {$\ell=\eps$\\ $\beta=\eps$};
\draw[edge, line width=1pt] (vA) -- (vC)
  node[midway, lbl, right=2pt] {$\ell=\eps$\\ $\beta=\eps/2$};
\draw[edge, line width=1pt] (vB) -- (vC)
  node[midway, lbl, below=2pt] {$\ell=5\eps$,~~$\beta=5\eps$};
\node[lbl] (gnote) at (6.50,-1.05)
  {$\eps+5\eps-\eps\ne0$~~(Prop.~\ref{prop:cycle})};
\begin{scope}[on background layer]
\node[box, fill=blue!4, inner sep=8pt, fit=(t2)(vA)(vB)(vC)(gnote)]
  (obs) {};
\end{scope}

% ---- (3) Bradley--Terry fit ----------------------------------------------------
\node[ttl] (t3) at (12.80,2.20) {$r^\dagger$};
\draw[black!40, line width=0.5pt] (12.10,-0.15) -- (12.10,1.90);
\node[dot] (fB) at (12.10,1.60) {};
\node[right=2pt of fB, font=\footnotesize] (fBl) {$B$~($10\eps/3$)};
\node[dot] (fA) at (12.10,1.05) {};
\node[right=2pt of fA, font=\footnotesize] (fAl) {$A$~($8\eps/3$)};
\node[dot] (fC) at (12.10,0) {};
\node[right=2pt of fC, font=\footnotesize] (fCl) {$C$~($0$)};
\draw[{Stealth[length=1.8mm]}-{Stealth[length=1.8mm]}, red!75,
      line width=0.7pt] (14.10,1.05) -- (14.10,1.60)
  node[midway, right=2pt, font=\scriptsize, text=red!75] (revl)
  {$B\succ A$};
\begin{scope}[on background layer]
\node[box, fill=red!4, inner sep=8pt, fit=(t3)(fB)(fC)(fBl)(fAl)(fCl)(revl)]
  (fit) {};
\end{scope}

% ---- flows ---------------------------------------------------------------------
\coordinate (a1) at (truth.east |- 0,0.95);
\coordinate (a2) at (obs.west  |- 0,0.95);
\coordinate (b1) at (obs.east  |- 0,0.95);
\coordinate (b2) at (fit.west  |- 0,0.95);
\draw[flow] (a1) -- (a2);
\draw[flow] (b1) -- (b2);
\node[lbl] at ($($(a1)!0.5!(a2)$)+(0,0.55)$)
  {$\ell_e=\beta_e\Delta^*_e$};
\node[lbl] at ($($(a1)!0.5!(a2)$)+(0,-0.55)$)
  {heterogeneous $\beta_e$\\
  (attention)};
\node[lbl] at ($($(b1)!0.5!(b2)$)+(0,0.55)$)
  {$(B^\top WB)^{+}B^\top W\ell$};
\node[lbl] at ($($(b1)!0.5!(b2)$)+(0,-0.55)$)
  {(Prop.~\ref{prop:local-projection})};
\end{tikzpicture}
\caption{The projection reversal (Example~\ref{ex:projection-reversal}).
Left to right: deliberative reward $R^*$, observed log odds $\ell$ produced
by the attention channel, and the Bradley--Terry fit $r^\dagger$. Attention
is heterogeneous across pairs, with $\beta_{BC}=5\eps$ but
$\beta_{AC}=\eps/2$. Every observed
majority favors the better item, yet the heterogeneity makes the cycle sum
nonzero, so no scalar reward represents $\ell$
(Proposition~\ref{prop:cycle}); the fit is the projection of $\ell$ onto
potential fields (Proposition~\ref{prop:local-projection}) and ranks $B$
above $A$ (red). 
% The error sits in the estimand, not in the data.
}
\label{fig:overview}
\end{figure}

The first step asks when attention-filtered comparisons are compatible with reward fitting \emph{at all}. Section~\ref{sec:model} motivates the channel \eqref{eq:intro-channel} from a Shannon rational-inattention first-order condition and states it as a reduced-form measurement model. Section~\ref{sec:limits} then gives an exact answer on a finite comparison graph. A scalar Bradley--Terry reward can represent the observed log odds if and only if their circulation around every cycle vanishes (Proposition~\ref{prop:cycle}). Homogeneous attention passes this test, because a common multiplier merely rescales reward; heterogeneous attention generically fails it. So the object that reward fitting presumes need not exist, and the question becomes what a Bradley--Terry learner converges to instead.

The second step characterizes that limit and shows the resulting error is one of kind, not degree. Near the low-information regime, the population Bradley--Terry solution is the weighted projection of the observed log-odds field onto the space of potential fields (Proposition~\ref{prop:local-projection}), an object shaped by the comparison graph and query distribution as much as by the underlying values. A three-item example makes the danger concrete. The projection ranks item $B$ above item $A$ even though $A$ is deliberatively better and \emph{every pairwise majority points in the correct direction} (Example~\ref{ex:projection-reversal}). One might hope this is an artifact of the Bradley--Terry functional form, to be repaired by a richer model class, a prior, or more data. Proposition~\ref{prop:nonidentification} shows that this is not the case. Reward, attention, and defaults are not separately identified from passive comparison probabilities, so the same data are consistent with many incompatible reward vectors no matter what estimator consumes them.

The third step explains why these failures are inevitable and what they cost, moving from the Bradley--Terry learner to arbitrary procedures by working at the level of the labels themselves. A comparison label is a one-bit message emitted after costly information acquisition, and its entropy is not its information content. A label can be maximally random while carrying arbitrarily little information about the reward gap (Proposition~\ref{prop:entropy-information}). Per-label mutual and Fisher information about reward scale as $\beta_z^2$, and when attention is unknown the local information matrix is rank one, so the likelihood sees only the product $\beta\Delta$ (Proposition~\ref{prop:fisher}). This is the local face of the non-identification above. KL and Fano arguments turn the scaling into sample complexity. Distinguishing $\Delta=\delta$ from $\Delta=-\delta$ requires on the order of $1/(\beta^2\delta^2)$ labels, and reward recovery over any hypothesis class is governed by \emph{total attended information} rather than the number of annotations (Theorem~\ref{thm:fano}). 
% Finally, the cycle obstruction of the first step returns in information units. The log-odds field splits into a potential component, which is what Bradley--Terry estimates, and a cyclic component, whose weighted energy prices the KL loss of the best possible scalar-reward approximation (Proposition~\ref{prop:hodge-kl}).

Finally, Section~\ref{sec:casestudy} tests the theory empirically in two settings. In human votes over language-model pairs from Chatbot Arena, the observed log-odds field carries cyclic energy beyond sampling noise, rejecting representation by any scalar score, and the one-eighth law of Proposition~\ref{prop:hodge-kl} prices the misfit to within about ten percent. In a perceptual comparison dataset with response times and gaze, the channel's ingredients are visible directly. Psychometric slopes vary sixfold across evaluators, gaze appears as an additive default, and response time reveals gap-magnitude information that is absent from single labels alone.

% Proofs of all results are collected in Appendix~\ref{app:proofs}.

\paragraph{Related work.}
\emph{Reward learning from human feedback and its failure modes.}
Pairwise comparisons are the standard interface for aligning language models and agentic systems \citep{wirth2017survey,christiano2017deep,ziegler2019finetuning,stiennon2020learning,ouyang2022training}; direct preference optimization removes the explicit reward model but keeps the Bradley--Terry link between choices and reward differences \citep{rafailov2023direct}. A growing literature documents failure modes of this pipeline. Reward models degrade under optimization pressure \citep{gao2023scaling}, admit reward hacking through gaps between proxy and target \citep{skalse2022defining}, absorb surface regularities such as response length \citep{singhal2023length}, and inherit fundamental limitations of human oversight \citep{casper2023open}. We contribute an attention-based account of one such limitation. Even with unlimited data and no optimization pressure, the Bradley--Terry estimand itself can be wrong once evaluator attention is heterogeneous.

\emph{Human models in reward inference.}
Any reward-learning method embeds a model of the human data generator \citep{jeon2020reward}. The classical choices are noiseless or Boltzmann-rational behavior \citep{luce1959individual}; richer models treat systematic human biases as objects to be learned \citep{evans2016learning,shah2019feasibility} or cast value learning as a cooperative game between human and machine \citep{hadfieldmenell2016cooperative}. The attention-scaled channel is motivated by endogenous attention allocation. Costly evaluation makes the distortion query-specific, correlated with evaluation difficulty, and, as Section~\ref{sec:limits} shows, unidentifiable from passive labels.

\emph{Beyond Bradley--Terry.}
Motivated in part by intransitivity in human preference data, recent work replaces the scalar-reward assumption with general preference models and solves for von Neumann winners or Nash equilibria of a preference game \citep{azar2024general,munos2024nash,swamy2024minimaximalist}. Our results are complementary. The attention-scaled channel shows how cyclic comparison data can arise from heterogeneous attention, and Proposition~\ref{prop:hodge-kl} prices the information that any scalar reward must discard.

\emph{Statistical ranking and rational inattention.}
The statistical ranking literature studies estimation from pairwise comparisons when Bradley--Terry or a parametric relative is correctly specified \citep{negahban2017rank,shah2017simple,zhu2023principled}; we instead characterize the pseudo-true target under the misspecification that attention induces. The potential--cyclic decomposition of Section~\ref{sec:hodge-information} is the combinatorial Hodge decomposition introduced to statistical ranking by \citet{jiang2011statistical}; here the cyclic component arises from heterogeneous attention, and we quantify its KL cost for scalar reward fitting. A behavioral motivation is Shannon rational inattention. The generalized-logit structure of optimal behavior follows \citet{matejka2015rational}, building on \citet{sims2003implications}, with revealed-preference and discrete-choice formulations in \citet{caplin2015revealed} and \citet{fosgerau2020discrete}. We use this theory to motivate, rather than fully identify from first principles, a measurement channel for alignment feedback.

\emph{Scalable oversight.}
The premise that the most alignment-relevant comparisons are the hardest to evaluate motivates scalable-oversight proposals such as recursive reward modeling, debate, and evaluation assistance \citep{amodei2016concrete,leike2018scalable,irving2018ai,bowman2022measuring}. Our lower bounds formalize what such interventions must accomplish. Because sign, ranking, and reward recovery are governed by attended information, an oversight protocol adds value insofar as it raises the attention multiplier on the comparisons that matter, and active query selection \citep{sadigh2017active} cannot substitute for attention it does not change.

% \medskip
% The broader message is that RLHF labels should not be read as direct revealed preferences. They are choices produced by an evaluation process with limited attention, salience, and time, and the properties alignment cares most about, such as subtle deception, long-horizon side effects, and violations of underlying intent, may be the very ones that are hardest to evaluate quickly. A reward model trained on weak comparisons may therefore learn what was easy to notice rather than what would survive careful deliberation. The rest of the paper formalizes this measurement problem and characterizes the information-theoretic limits it imposes.

\section{Model}
\label{sec:model}

\subsection{Pairwise reward learning}

Let $x\in\calX$ denote a context, prompt, market state, or initial state, and let $\calY(x)$ be the feasible candidate set. A comparison query is a triple
$$
    z=(x,y^0,y^1)\in\calZ,
$$
and the observed label is $c\in\{0,1\}$, where $c=1$ records a choice of $y^1$ over $y^0$. Queries are drawn from a distribution $\rho$ determined by the data-collection policy.

The target is a deliberative reward $R^*(x,y)\in\R$, interpreted as the evaluation the principal wants the human to apply after processing all reward-relevant information under the intended criterion. Write
$$
    \Delta_z^*=R^*(x,y^1)-R^*(x,y^0)
$$
for the deliberative difference in query $z$. A standard reward learner chooses a score function $r\in\calR$ by logistic maximum likelihood,
\begin{equation}
    \widehat r_n\in\argmax_{r\in\calR}
    \frac1n\sum_{t=1}^n
    \left\{
    c_t\log \sig(d_r(z_t))+(1-c_t)\log \sig(-d_r(z_t))
    \right\},
    \qquad
    d_r(z)=r(x,y^1)-r(x,y^0),
    \label{eq:empirical-risk}
\end{equation}
under the maintained hypothesis that $\Prb[c=1\mid z]=\sig(d_r(z))$ for some reward aligned with $R^*$. The paper studies what comparison data reveal when human labels are instead attention-filtered measurements of deliberative reward.

\subsection{Rationally inattentive comparison behavior}

Fix a query $z$ and let $\omega\in\Omega_z$ denote the evidence relevant to the comparison: facts in the context, latent consequences of the two candidates, safety implications, or other payoff-relevant information. For the rational-inattention benchmark in this subsection, assume $\Omega_z$ is finite and the prior $\mu_z$ has full support. A comparator chooses an attention-decision rule $\pi_z(a\mid\omega)$ over actions $a\in\{0,1\}$, where action $a=1$ means choosing $y^1$ and action $a=0$ means choosing $y^0$. The marginal action probability induced by $\pi_z$ is
$$
    \bar\pi_z(a)=\sum_{\omega\in\Omega_z}\mu_z(\omega)\pi_z(a\mid\omega).
$$

Let $u_z(a,\omega)$ be the comparator's payoff, encoding the evaluation criterion the principal intends. The comparator solves the Shannon rational-inattention problem
\begin{equation}
    \max_{\pi_z}
    \E_{\omega\sim\mu_z,\,a\sim\pi_z(\cdot\mid\omega)}[u_z(a,\omega)]
    -\kappa_z \MI_z(\omega;a),
    \label{eq:ri-problem}
\end{equation}
where $\kappa_z>0$ is the marginal cost of information and
\begin{equation}
    \MI_z(\omega;a)
    =\sum_{\omega,a}\mu_z(\omega)\pi_z(a\mid\omega)
    \log\frac{\pi_z(a\mid\omega)}{\bar\pi_z(a)} .
    \label{eq:mutual-info}
\end{equation}
The next lemma is the standard generalized-logit form of the Shannon solution. It is included to pin down where the attention multiplier and the endogenous marginal action tendency enter.

\begin{lemma}[Shannon rational inattention implies generalized logit]
\label{lem:ri-logit}
At any interior optimum of \eqref{eq:ri-problem},
\begin{equation}
    \pi_z(a\mid\omega)
    =\frac{\bar\pi_z(a)\exp(u_z(a,\omega)/\kappa_z)}
    {\sum_{b\in\{0,1\}}\bar\pi_z(b)\exp(u_z(b,\omega)/\kappa_z)} .
    \label{eq:generalized-logit}
\end{equation}
Consequently,
\begin{equation}
    \log\frac{\pi_z(1\mid\omega)}{\pi_z(0\mid\omega)}
    =\alpha_z+\beta_z\{u_z(1,\omega)-u_z(0,\omega)\},
    \qquad
    \alpha_z=\log\frac{\bar\pi_z(1)}{\bar\pi_z(0)},\quad
    \beta_z=\kappa_z^{-1}.
    \label{eq:conditional-logodds}
\end{equation}
\end{lemma}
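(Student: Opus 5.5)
We write the objective in \eqref{eq:ri-problem} as a function of the conditional rule $\pi_z(\cdot\mid\omega)$, $\omega\in\Omega_z$, and take Lagrangian first-order conditions at an interior optimum. Since $\Omega_z$ is finite and $\mu_z$ has full support, the feasible set is a product of simplices $\{\pi_z(0\mid\omega)+\pi_z(1\mid\omega)=1\}$. We attach a multiplier $\mu_z(\omega)\lambda_\omega$ to each normalization constraint. Interiority means $\pi_z(a\mid\omega)>0$ for all $a,\omega$. Hence no nonnegativity constraint binds, and the stationarity conditions hold with equality.

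The only delicate step is differentiating the mutual information \eqref{eq:mutual-info}, because $\bar\pi_z$ itself depends on $\pi_z$. We write
\[
\MI_z=\sum_{\omega,a}\mu_z(\omega)\pi_z(a\mid\omega)\log\pi_z(a\mid\omega)-\sum_a\bar\pi_z(a)\log\bar\pi_z(a).
\]
The derivative with respect to $\pi_z(a\mid\omega)$ is then
\[
\mu_z(\omega)\bigl[\log\pi_z(a\mid\omega)+1\bigr]-\mu_z(\omega)\bigl[\log\bar\pi_z(a)+1\bigr]=\mu_z(\omega)\log\frac{\pi_z(a\mid\omega)}{\bar\pi_z(a)}.
\]
The constants cancel, and this cancellation is exactly why the indirect dependence through $\bar\pi_z$ leaves no extra term. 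The derivative of the expected payoff is $\mu_z(\omega)u_z(a,\omega)$.

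Setting the Lagrangian derivative to zero and dividing by $\mu_z(\omega)>0$ gives
\[
u_z(a,\omega)-\kappa_z\log\frac{\pi_z(a\mid\omega)}{\bar\pi_z(a)}=\lambda_\omega .
\]
Therefore $\pi_z(a\mid\omega)=\bar\pi_z(a)\exp(u_z(a,\omega)/\kappa_z)\,e^{-\lambda_\omega/\kappa_z}$. Summing over $a\in\{0,1\}$ and using the normalization fixes $e^{-\lambda_\omega/\kappa_z}$ as the reciprocal of the denominator, which yields \eqref{eq:generalized-logit}. This is a fixed-point characterization, since $\bar\pi_z$ appears on both sides. The lemma only claims this form at an optimum, so we do not need existence or uniqueness of the fixed point.

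For \eqref{eq:conditional-logodds}, we take the ratio of \eqref{eq:generalized-logit} at $a=1$ and $a=0$. The common denominator cancels, and taking logs gives
\[
\log\frac{\bar\pi_z(1)}{\bar\pi_z(0)}+\kappa_z^{-1}\bigl\{u_z(1,\omega)-u_z(0,\omega)\bigr\}.
\]
Reading off the two terms identifies $\alpha_z$ and $\beta_z=\kappa_z^{-1}$. The main obstacle is the derivative of $\MI_z$ through $\bar\pi_z$; once that is handled as above, the remaining steps are direct computation. If a concavity remark is wanted, we can note that the objective is concave in $\pi_z$, since mutual information is convex in the channel for a fixed input distribution. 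The lemma asserts only a necessary condition, however, so this remark is not needed.
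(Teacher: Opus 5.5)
Your proposal is correct and follows essentially the same route as the paper. Both arguments split the mutual information into the two entropy-type sums, obtain the derivative $\mu_z(\omega)\log\{\pi_z(a\mid\omega)/\bar\pi_z(a)\}$, impose Lagrange multipliers on the per-state normalization constraints, normalize, and take the ratio of the two actions. The only differences are cosmetic: you scale the multiplier by $\mu_z(\omega)$ and use the opposite sign convention.
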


Equation~\eqref{eq:conditional-logodds} is a conditional first-order characterization. The payoff difference is multiplied by the query-specific inverse information cost $\beta_z$, while the endogenous marginal action tendency $\alpha_z$ enters additively. It does not by itself pin down the ex-ante marginal label probability obtained by integrating over evidence states, and when the payoff difference is deterministic and nonzero the optimum is a boundary case to which the interior logit formula does not apply. The rest of the paper therefore takes the following equation as a reduced-form marginal measurement channel, motivated by the conditional rational-inattention first-order condition rather than derived from it. The additive term $\eta_z$ in the reduced form should be read as standing in for the endogenous action tendency $\alpha_z$ together with genuinely exogenous order, format, and salience effects; Remark~\ref{rem:conditional-exact} gives one environment in which the channel holds exactly.

\begin{definition}[Attention-scaled comparison channel]
\label{def:attention-channel}
An attention-scaled comparison channel is a map $z\mapsto q_z\in(0,1)$ satisfying
\begin{equation}
    q_z=\sig(\ell_z),
    \qquad
    \ell_z=\eta_z+\beta_z\Delta_z^*,
    \qquad
    \beta_z\ge 0.
    \label{eq:attention-channel}
\end{equation}
The case $\eta_z=0$ and $\beta_z\equiv\beta>0$ is called homogeneous attention.
\end{definition}

\begin{remark}[An exact conditional interpretation]
\label{rem:conditional-exact}
There is one environment in which the channel holds exactly rather than as a reduced form. Suppose the evidence state $\omega$ is realized but unknown to the comparator ex ante, and let $\Delta_z^*=u_z(1,\omega)-u_z(0,\omega)$ be the payoff difference in the realized state, which is the deliberative difference of Section~\ref{sec:model} when the payoff encodes the intended criterion. If the optimum of \eqref{eq:ri-problem} is interior, Lemma~\ref{lem:ri-logit} implies that the label distribution \emph{conditional on the realized state} is exactly $\sig(\alpha_z+\beta_z\Delta_z^*)$, which is Definition~\ref{def:attention-channel} with $\eta_z=\alpha_z$. In this paper, the default is endogenous, determined by the prior $\mu_z$ and the information cost $\kappa_z$, and this is why priors and salience enter additively rather than being scaled like reward. The reduced form frees $\eta_z$ from this benchmark to accommodate order and format effects as well.
\end{remark}

All graph-theoretic and information-theoretic results below are statements about the reduced-form channel in Definition~\ref{def:attention-channel}. If $\beta_z=0$, the comparison label is independent of the deliberative reward difference after conditioning on the default term. Such a query may still produce random-looking labels, but it carries no information about the sign or magnitude of $\Delta_z^*$ through the reward channel.

With the channel in hand, the paper's question can be stated. {\em A learner observes the choice probabilities $q_z$, while the object of interest is $\Delta_z^*$; what does the former reveal about the latter? }Section~\ref{sec:limits} answers for the standard Bradley--Terry reward learner, at the level of representation and identification. Section~\ref{sec:information} then drops the estimator from the picture and answers at the level of information, bounding what any procedure could distinguish with any amount of data.

\section{Why Raw Comparisons Are Not Enough}
\label{sec:limits}

This section studies the standard Bradley--Terry reward learner under the attention-scaled channel. We first ask whether the observed log odds are compatible with any scalar reward at all, and the answer is a cycle criterion that heterogeneous attention generically violates. Practitioners fit Bradley--Terry whether or not the criterion holds, so the next question is what the fit converges to when it fails. It converges to a projection of the log-odds field, and the projection can reverse the deliberative ranking (Section~\ref{sec:blind-target}). The last question is whether a different parameterization of the reward model could avoid these problems. It cannot, because without further restrictions reward, attention, and defaults are observationally equivalent along a continuum of explanations (Section~\ref{sec:nonidentification}).

\subsection{Cycle obstructions to scalar representability}

The first question is whether attention-filtered log odds can be represented by any scalar reward at all. This is a purely algebraic question about the structure of the log-odds field, and it has a complete answer. Fix one context and a finite candidate set $V=\{1,\ldots,m\}$. Let $G=(V,E)$ be the undirected comparison graph. Choose an arbitrary orientation for each edge $e=(i,j)\in E$ and let $q_e=q_{ij}$ be the probability that $i$ is chosen over $j$ in that oriented comparison. Define $\ell_e=\logit(q_e)$, and use the antisymmetric extension $\ell_{ji}=-\ell_{ij}$ when an edge is traversed against its chosen orientation. Let $B\in\R^{E\times V}$ be the signed incidence matrix with $(Br)_e=r_i-r_j$ for $e=(i,j)$. A Bradley--Terry representation of the channel is exactly a potential representation $\ell=Br$. The only obstruction is circulation around cycles.

\begin{proposition}[Cycle consistency]
\label{prop:cycle}
Assume $G$ is connected and $q_e\in(0,1)$ on every compared edge. There exists a score vector $r\in\R^m$ such that $\ell_e=(Br)_e$ on every edge if and only if
\begin{equation}
    \sum_{k=1}^K \ell_{i_k i_{k+1}}=0,
    \qquad i_{K+1}=i_1,
    \label{eq:cycle-condition}
\end{equation}
for every directed cycle $(i_1,i_2,\ldots,i_K,i_1)$ in $G$, where reversing an oriented edge changes the sign of its log odds. When such a score exists, it is unique up to an additive constant.
\end{proposition}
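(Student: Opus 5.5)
The plan is the standard discrete Poincaré-lemma argument on a connected graph: necessity by telescoping, sufficiency by integrating $\ell$ along paths from a root, and uniqueness from the kernel of $B$. Throughout, the antisymmetric extension $\ell_{ji}=-\ell_{ij}$ is what makes the path sums below well defined. The finiteness hypothesis $q_e\in(0,1)$ is used only to ensure $\ell_e$ is a finite real number.

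\emph{Necessity.} Suppose $\ell=Br$. For any oriented edge traversed from $i$ to $j$, whether with or against its chosen orientation, we have $\ell_{ij}=r_i-r_j$; against the orientation this follows from antisymmetry. Along a directed cycle $(i_1,\ldots,i_K,i_1)$ the sum $\sum_k \ell_{i_k i_{k+1}}=\sum_k (r_{i_k}-r_{i_{k+1}})$ telescopes to $r_{i_1}-r_{i_{K+1}}=0$. This gives \eqref{eq:cycle-condition}.

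\emph{Sufficiency.} Fix a root vertex $v_0$ and set $r_{v_0}=0$. For any $v$, choose a walk $v=w_0,w_1,\ldots,w_L=v_0$ in $G$, which exists by connectedness, and define $r_v=\sum_{k=0}^{L-1}\ell_{w_k w_{k+1}}$. The main obstacle is showing this is independent of the chosen walk. Given two walks $P,P'$ from $v$ to $v_0$, concatenate $P$ with the reverse of $P'$ to obtain a closed walk. By antisymmetry the sum over the closed walk equals $\mathrm{sum}(P)-\mathrm{sum}(P')$. The cycle condition is stated for directed cycles, whereas closed walks may repeat vertices or backtrack, so this case needs an argument. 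I would handle it by induction on the length of the closed walk. If the walk has no repeated vertex other than its endpoint, it is either a cycle, in which case \eqref{eq:cycle-condition} applies, or a back-and-forth traversal $i\to j\to i$, whose sum is zero by antisymmetry. Otherwise, split the walk at a repeated vertex into two shorter closed walks whose sums add to the original sum, and apply the induction hypothesis to each. So every closed-walk sum vanishes and $r$ is well defined. Then, for an edge $e=(i,j)$, the walk from $i$ that first steps to $j$ and then follows a walk from $j$ to $v_0$ shows $r_i=\ell_{ij}+r_j$, that is, $(Br)_e=\ell_e$.

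\emph{Uniqueness.} If $Br=Br'=\ell$, then $B(r-r')=0$, so $r-r'$ takes equal values at the two endpoints of every edge. Connectedness then forces $r-r'$ to be constant on $V$. Equivalently, $\ker B=\mathrm{span}(\one)$ for a connected graph.
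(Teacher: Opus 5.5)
Your proof is correct and follows the same argument as the paper: telescoping for necessity, integrating $\ell$ along routes from a root vertex for sufficiency, with path independence coming from decomposing closed walks into cycles, and connectedness for uniqueness up to an additive constant. Your induction on closed-walk length, with the $i\to j\to i$ base case, and your use of walks rather than paths make explicit the step the paper covers in one sentence (``removing repeated vertices decomposes that closed walk into simple cycles''); this also covers the paper's edge step, where the route through $e$ need not be a simple path.
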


The proposition gives an immediate diagnostic for attention-filtered data. Homogeneous attention is harmless because it merely rescales the potential. Potential defaults are representable but still misaligned, because the fitted reward absorbs the default. Heterogeneous edge-specific attention generally creates nonzero cycle sums.

\begin{corollary}[Representability of attention-scaled comparisons]
\label{cor:ri-representable}
On a finite connected comparison graph, attention-scaled log odds
$$
    \ell_{ij}=\eta_{ij}+\beta_{ij}(R_i^*-R_j^*)
$$
admit an exact Bradley--Terry reward if and only if their cycle sums vanish. In particular:
\begin{enumerate}
    \item if $\eta_{ij}=0$ and $\beta_{ij}=\beta>0$ on all edges, then $r_i=\beta R_i^*$ is an exact representation;
    \item if $\beta_{ij}=\beta>0$ and $\eta_{ij}=b_i-b_j$ for some potential $b\in\R^m$, then $r_i=\beta R_i^*+b_i$ is an exact representation.
\end{enumerate}
\end{corollary}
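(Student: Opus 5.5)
The plan is to obtain the corollary directly from Proposition~\ref{prop:cycle}, treating the attention-scaled log odds as one particular edge field on $G$.

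\emph{Main equivalence.} Let $\ell_{ij}=\eta_{ij}+\beta_{ij}(R_i^*-R_j^*)$. Each $\ell_{ij}$ is finite, so $q_e=\sig(\ell_e)\in(0,1)$ on every compared edge. The graph is finite and connected by hypothesis. These are exactly the hypotheses of Proposition~\ref{prop:cycle}, which then says that $\ell=Br$ for some $r$ if and only if every directed cycle sum of $\ell$ vanishes.

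One point needs care: the antisymmetric extension. Traversing an edge against its orientation must give the log odds of the reversed comparison. That is $\logit(1-q_{ij})=-\ell_{ij}$. So I will record that the channel is specified consistently, with $\eta_{ji}=-\eta_{ij}$ and $\beta_{ji}=\beta_{ij}$. With this convention, the formula for $\ell_{ji}$ agrees with $-\ell_{ij}$. This bookkeeping is the only step needing any attention, and it is a matter of stating conventions rather than a real obstacle.

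\emph{Item 1.} Set $r_i=\beta R_i^*$ and check edgewise that $(Br)_{ij}=\beta(R_i^*-R_j^*)=\ell_{ij}$. I will also note the cycle-sum view for consistency: a homogeneous $\beta$ factors out of any cycle sum, leaving $\beta\sum_k (R^*_{i_k}-R^*_{i_{k+1}})=0$ by telescoping.

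\emph{Item 2.} Set $r_i=\beta R_i^*+b_i$. Then $(Br)_{ij}=\beta(R_i^*-R_j^*)+b_i-b_j=\ell_{ij}$. The default term is itself a potential field, so it telescopes around cycles.

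\emph{Closing remark.} By the uniqueness clause of Proposition~\ref{prop:cycle}, these are the only representations up to an additive constant. In item 2 this means the fitted reward necessarily absorbs $b$, which supports the text's claim that such representations are ``representable but misaligned.''
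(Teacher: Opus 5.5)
Your proposal is correct and matches the paper's proof: substitute the channel into Proposition~\ref{prop:cycle} for the equivalence, then verify the two explicit representations edgewise. The extra bookkeeping on the antisymmetric extension and the uniqueness remark are harmless additions.
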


The second case illustrates a subtle failure mode. A default with potential structure, such as a systematic preference for familiar formats or concise answers, is perfectly representable by Bradley--Terry scores. It is nevertheless not deliberative reward; it reverses pairs whenever the default difference dominates the attended reward difference.

\subsection{The attention-blind Bradley--Terry target}
\label{sec:blind-target}

A failed cycle criterion does not stop anyone from fitting Bradley--Terry. The population likelihood remains strictly concave, so the fit converges regardless, and the question is what it converges to. The next proposition identifies the limit near the low-information regime as the weighted least-squares projection of the human log-odds field onto the space of potential fields. The fitted reward is therefore not a noisy copy of deliberative value but a graph-dependent compression of it, shaped by the comparison graph and the sampling policy as much as by human values.

Let $W=\diag(\rho_e)$ collect the edge sampling probabilities, with $\rho_e>0$ and $\sum_e\rho_e=1$. Normalize scores by $\one^\top r=0$. For an oriented edge vector $\ell$, define the population objective
\begin{equation}
    L(r;\ell)
    =\sum_{e\in E}\rho_e
    \left\{\sig(\ell_e)\log\sig((Br)_e)
    +\bigl(1-\sig(\ell_e)\bigr)\log\sig(-(Br)_e)\right\}.
    \label{eq:population-bt-graph}
\end{equation}

\begin{proposition}[Local projection of log odds]
\label{prop:local-projection}
Suppose $G$ is connected, $\rho_e>0$ on every edge, and $\|\ell\|_\infty\le \eps$. Let $r^\dagger(\ell)$ be the unique maximizer of \eqref{eq:population-bt-graph} on the subspace $\one^\top r=0$. Then, as $\eps\to0$,
\begin{equation}
    r^\dagger(\ell)
    = (B^\top W B)^+B^\top W\ell + O(\eps^3),
    \label{eq:projection-formula}
\end{equation}
where $(\cdot)^+$ denotes the Moore--Penrose inverse acting on the subspace orthogonal to constants. The remainder is in Euclidean norm and is uniform over sufficiently small $\|\ell\|_\infty$.
\end{proposition}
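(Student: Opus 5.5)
The plan is to write down the first-order condition of \eqref{eq:population-bt-graph}, expand it around the origin, and read off both the leading term and the size of the remainder. I would first record existence, uniqueness and smallness of $r^\dagger$, then use an oddness argument to show that the correction is cubic rather than quadratic.

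\textbf{First-order condition.} Put $d=Br$ and $s=\sig(\ell)$. For each edge, the derivative of the summand with respect to $d_e$ is $s_e\,\sig(-d_e)-(1-s_e)\sig(d_e)=s_e-\sig(d_e)$. The stationarity condition on the subspace $\one^\top r=0$ is therefore
\begin{equation*}
    F(r,\ell):=B^\top W\bigl(\sig(\ell)-\sig(Br)\bigr)=0 .
\end{equation*}
Here $\sig$ acts coordinatewise. This equation automatically lies in $\one^\perp$, since $\one^\top B^\top=0$.

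\textbf{Existence, uniqueness and smallness.} Strict concavity on $\one^\perp$ follows from the Hessian $-B^\top W\diag(\sig'(Br))B$. This matrix is negative definite on $\one^\perp$ because $G$ is connected and $\rho_e>0$, so that $\ker B=\mathrm{span}(\one)$. Existence of a maximizer holds for $\|\ell\|_\infty<\infty$, because the objective tends to $-\infty$ as $\|Br\|\to\infty$ when every $q_e\in(0,1)$. To get $\|r^\dagger\|=O(\eps)$ I would use the implicit function theorem at $(r,\ell)=(0,0)$. There $F(0,0)=0$ and $\partial_rF(0,0)=-\tfrac14 B^\top WB$, which is invertible on $\one^\perp$. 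The theorem gives a smooth (indeed analytic) branch $r(\ell)$ with $r(0)=0$ on a neighborhood of $0$. By uniqueness this branch coincides with $r^\dagger$, so $r^\dagger$ is smooth and $\|r^\dagger(\ell)\|\le C\|\ell\|_\infty$ uniformly for small $\eps$.

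\textbf{Expansion and oddness.} Write $\sig(t)=\tfrac12+\tfrac t4+g(t)$ with $g(t)=O(t^3)$. Since $\sig(t)-\tfrac12$ is odd, the expansion has no quadratic term. The condition $F=0$ becomes
\begin{equation*}
    \tfrac14B^\top W(\ell-Br)=-B^\top W\bigl(g(\ell)-g(Br)\bigr).
\end{equation*}
The right-hand side is $O(\eps^3)$ because $\|\ell\|_\infty\le\eps$ and $\|Br^\dagger\|_\infty=O(\eps)$. Applying $(B^\top WB)^+$, which is bounded on $\one^\perp$ with a constant depending only on $G$ and $\rho$, gives
\begin{equation*}
    r^\dagger=(B^\top WB)^+B^\top W\ell+O(\eps^3).
\end{equation*}
Here I use that $r^\dagger\in\one^\perp$, so $(B^\top WB)^+B^\top WBr^\dagger=r^\dagger$.

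\textbf{Cross-check on the exponent.} The cubic order also follows from symmetry. The objective is invariant under $(r,\ell)\mapsto(-r,-\ell)$, so uniqueness forces $r^\dagger(-\ell)=-r^\dagger(\ell)$. Hence $r^\dagger$ is an odd smooth function, and its Taylor expansion contains only odd orders. This independently confirms that the remainder is $O(\eps^3)$ rather than $O(\eps^2)$.

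\textbf{Main obstacle.} The delicate step is the uniformity claim: the a priori bound $\|r^\dagger\|=O(\eps)$ must hold with constants independent of $\ell$ in a small ball. The local implicit function theorem gives this on a fixed neighborhood of $0$. Outside that branch, global uniqueness of the concave maximizer rules out any other solution. Once the a priori bound is in place, the rest is a bounded-inverse estimate.
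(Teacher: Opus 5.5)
Your proposal is correct and follows essentially the same route as the paper. Both arguments obtain existence and uniqueness from concavity and coercivity on $\one^\perp$, and both use the first-order condition $B^\top W\{\sig(\ell)-\sig(Br)\}=0$ together with the implicit function theorem at the origin for the a priori $O(\eps)$ bound, then the odd expansion $\sig(t)=\tfrac12+\tfrac t4+O(t^3)$ to get an $O(\eps^3)$ remainder before inverting on $\one^\perp$; your extra symmetry check $r^\dagger(-\ell)=-r^\dagger(\ell)$ is a valid independent confirmation of the cubic order that the paper does not spell out.
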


The projection in Equation~\eqref{eq:projection-formula} discards all cyclic components of the human log odds. The following example shows that the induced error is not merely cardinal; it can reverse the learned ranking.

\begin{example}[A three-item projection reversal]
\label{ex:projection-reversal}
Consider one context with candidates $A,B,C$ and deliberative rewards
$$
    R_A^*=2,
    \qquad R_B^*=1,
    \qquad R_C^*=0.
$$
There is no default bias. Human log odds for the higher-reward item over the lower-reward item are
\begin{equation}
    \ell_{AB}=\eps,
    \qquad
    \ell_{AC}=\eps,
    \qquad
    \ell_{BC}=5\eps,
    \label{eq:example-logodds}
\end{equation}
for small $\eps>0$. These log odds arise from the scalar attention channel with $\beta_{AB}=\eps$, $\beta_{AC}=\eps/2$, and $\beta_{BC}=5\eps$. Thus every individual pairwise majority points in the deliberatively correct direction. Sampling the three pairs equally and normalizing $r_C=0$, Proposition~\ref{prop:local-projection}, equivalently the direct first-order calculation in the appendix, yields
\begin{equation}
    r_A^\dagger=\frac{8}{3}\eps+O(\eps^3),
    \qquad
    r_B^\dagger=\frac{10}{3}\eps+O(\eps^3),
    \qquad
    r_C^\dagger=0.
    \label{eq:example-scores}
\end{equation}
For sufficiently small $\eps$, the attention-blind Bradley--Terry target ranks $B$ above $A$ although $A$ is deliberatively best. The reversal is not a knife-edge feature of the expansion; solving the population first-order conditions exactly preserves the same ordering at moderate scales such as $\eps=0.4$.
\end{example}

\subsection{Non-identification}
\label{sec:nonidentification}

Example~\ref{ex:projection-reversal} might suggest that the problem lies with the Bradley--Terry functional form, and that a richer model class, a Bayesian prior over rewards, or simply more data would recover the deliberative ranking. The deeper obstacle is that comparison probabilities alone cannot disentangle reward, attention, and defaults, so any method that consumes only passive labels inherits the same ambiguity. The next proposition makes this point in finite-dimensional form.

\begin{proposition}[Non-identification of reward, attention, and defaults]
\label{prop:nonidentification}
Fix comparison probabilities $q_{ij}\in(0,1)$ on a finite set of oriented pairs, and write $\ell_{ij}=\logit(q_{ij})$.
\begin{enumerate}
    \item For any candidate reward vector $v\in\R^m$ and any nonnegative attention multipliers $\beta_{ij}\ge0$, the defaults
    \begin{equation}
        \eta_{ij}=\ell_{ij}-\beta_{ij}(v_i-v_j)
        \label{eq:eta-rationalization}
    \end{equation}
    rationalize the data through the attention-scaled channel.
    \item Suppose defaults are restricted to zero. If, for every compared pair, either $v_i\ne v_j$ and $\ell_{ij}\,(v_i-v_j)\ge0$, or $v_i=v_j$ and $\ell_{ij}=0$, then the same data are rationalized by the pair-specific multipliers
    \begin{equation}
        \beta_{ij}=\frac{\ell_{ij}}{v_i-v_j}
        \label{eq:beta-rationalization}
    \end{equation}
    on all compared pairs with $v_i\ne v_j$, and by any nonnegative $\beta_{ij}$ on zero-gap pairs with $\ell_{ij}=0$.
\end{enumerate}
\end{proposition}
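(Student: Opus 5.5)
The plan is direct verification against Definition~\ref{def:attention-channel}. Both parts reduce to an algebraic identity on the log-odds scale, followed by a check that the constructed parameters are admissible ($\beta\ge0$, $q\in(0,1)$).

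For part~1, fix an arbitrary $v\in\R^m$ and arbitrary $\beta_{ij}\ge0$, and define $\eta_{ij}$ by \eqref{eq:eta-rationalization}. Substituting into the channel gives
\[
\eta_{ij}+\beta_{ij}(v_i-v_j)=\ell_{ij}-\beta_{ij}(v_i-v_j)+\beta_{ij}(v_i-v_j)=\ell_{ij}.
\]
Hence $\sig(\eta_{ij}+\beta_{ij}(v_i-v_j))=\sig(\logit q_{ij})=q_{ij}$, since $\sig$ and $\logit$ are mutually inverse on $(0,1)$. The defaults are finite because $q_{ij}\in(0,1)$ makes $\ell_{ij}$ finite, and no constraint is placed on $\eta$. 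So the tuple $(v,\beta,\eta)$ is an attention-scaled channel with deliberative differences $\Delta^*_{ij}=v_i-v_j$ that reproduces every $q_{ij}$. I would also note antisymmetry. If reversed orientations appear, then $\ell_{ji}=-\ell_{ij}$, and choosing $\beta_{ji}=\beta_{ij}$ gives $\eta_{ji}=-\eta_{ij}$, so the construction is orientation-consistent.

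For part~2, set $\eta\equiv0$. I split the compared pairs into two cases. When $v_i\ne v_j$, define $\beta_{ij}$ by \eqref{eq:beta-rationalization}. Then $\beta_{ij}\ge0$ follows exactly from the sign hypothesis $\ell_{ij}(v_i-v_j)\ge0$: dividing by $(v_i-v_j)^2>0$ gives $\ell_{ij}/(v_i-v_j)\ge0$. Also $\beta_{ij}(v_i-v_j)=\ell_{ij}$, so the channel reproduces $q_{ij}$. When $v_i=v_j$, the hypothesis gives $\ell_{ij}=0$, i.e.\ $q_{ij}=1/2$, and for any $\beta_{ij}\ge0$ we get $0+\beta_{ij}\cdot0=0=\ell_{ij}$. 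The formula is again symmetric under reversal, since $\ell_{ji}/(v_j-v_i)=\ell_{ij}/(v_i-v_j)$, so it is well defined on unordered pairs.

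There is no real analytic obstacle. The only points needing care are checking nonnegativity of the constructed $\beta$ (the role of the sign hypothesis) and handling the zero-gap pairs separately to avoid division by zero. I would close with a remark that the interpretive content follows from these constructions. Since $v$ is arbitrary in part~1, and arbitrary up to the sign-compatibility constraint in part~2, any reward vector ordering each compared pair consistently with its observed majority rationalizes the data. The data therefore cannot single out $R^*$, or even its ranking, beyond the pairwise majority signs.
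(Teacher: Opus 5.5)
Your proposal is correct and matches the paper's proof: direct substitution into the channel for part~1, and the case split on $v_i\ne v_j$ versus $v_i=v_j$ for part~2, with the sign hypothesis giving $\beta_{ij}\ge0$. Your orientation-consistency aside is not needed, since each oriented pair gets its own $(\beta_{ij},\eta_{ij})$. It also relies on the antisymmetric convention $\ell_{ji}=-\ell_{ij}$, which need not hold if both orientations are observed separately and order effects are present.
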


This proposition is the formal reason that a Bayesian prior, a larger neural reward model, or more passive samples cannot by itself recover deliberative reward. Without restrictions that distinguish reward from attention and defaults, the likelihood is flat along observationally equivalent explanations.

Non-identification says the likelihood is flat; it does not yet say why the flatness arises, or how much data it costs even where the parameters are partially informative. Both questions have exact answers once the label is treated as a communication channel and its information content is accounted for directly. That accounting is the subject of the next section.

\section{Information-Theoretic Limits of Reward Learning}
\label{sec:information}

Section~\ref{sec:limits} located three failures of the Bradley--Terry reward learner. The log odds need not be representable by any scalar reward, the fit converges to a graph-dependent projection, and the likelihood is flat across observationally equivalent explanations. All three statements concern one particular estimator, so one might still hope that a cleverer use of the same labels escapes them. This section shows that they are instead properties of the labels themselves, by re-deriving each failure at the level of information, where no estimator appears. The weak-signal ambiguity becomes a separation between label entropy and reward information (Section~\ref{sec:entropy-not-information}); non-identification becomes a rank-one information matrix (Section~\ref{sec:fisher}); the estimation failures become sample-complexity lower bounds that bind every procedure, adaptive or not (Section~\ref{sec:kl-lower-bound}); and the cycle obstruction acquires an exact price in KL divergence (Section~\ref{sec:hodge-information}). Throughout, a comparison label is viewed as a binary message emitted after information acquisition. Its entropy may be high while the information it carries about deliberative reward is arbitrarily small. Defaults are suppressed when they are not essential, since adding a known default shifts the log odds but does not change the information bottleneck created by $\beta$.

\subsection{High label entropy is not high reward information}
\label{sec:entropy-not-information}

A nearly even split of labels is often treated as the most informative region of a logistic model because the Bernoulli variance is largest near one half. That intuition is incomplete. If the split is even because attention is near zero, the label may be maximally random while revealing almost nothing about reward.

Consider a single pair and let $\Delta$ be an unknown deliberative reward difference with a finite prior distribution $\Pi$. Conditional on $\Delta$, the observed label satisfies
\begin{equation}
    C\mid \Delta \sim \Bern\bigl(\sig(\eta+\beta\Delta)\bigr),
    \qquad \beta\ge0.
    \label{eq:random-gap-channel}
\end{equation}
Let $\Ent(C)$ denote the entropy of the marginal label and $\MI(\Delta;C)$ the mutual information between the reward gap and the label, measured in nats.

\begin{proposition}[Entropy-information separation]
\label{prop:entropy-information}
In the channel \eqref{eq:random-gap-channel}:
\begin{enumerate}
    \item if $\beta=0$, then $C$ is independent of $\Delta$, so $\MI(\Delta;C)=0$;
    \item if, in addition, $\eta=0$, then $C$ is a fair coin, so $\Ent(C)=\log 2$ while $\MI(\Delta;C)=0$;
    \item if the prior on $\Delta$ has finite support, then as $\beta\to0$,
    \begin{equation}
        \MI(\Delta;C)
        =\frac12\sig(\eta)\bigl(1-\sig(\eta)\bigr)\beta^2\Var(\Delta)+O(\beta^3).
        \label{eq:mi-small-beta}
    \end{equation}
\end{enumerate}
\end{proposition}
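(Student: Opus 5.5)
Parts 1 and 2 are immediate and I would dispatch them first. When $\beta=0$ the conditional law of $C$ given $\Delta$ is $\Bern(\sig(\eta))$ for every value of $\Delta$. So $C$ is independent of $\Delta$ and $\MI(\Delta;C)=0$. If in addition $\eta=0$, then $\sig(0)=1/2$, so $C$ is a fair coin and $\Ent(C)=\log 2$.

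For part 3 I would use the representation
\[
\MI(\Delta;C)=\E_{\Delta\sim\Pi}\bigl[\mathrm{kl}\bigl(p(\Delta)\,\|\,\bar p\bigr)\bigr],
\]
where $p(\Delta)=\sig(\eta+\beta\Delta)$, $\bar p=\E_\Pi[p(\Delta)]$, and $\mathrm{kl}$ denotes the binary KL divergence. Set $p_0=\sig(\eta)$ and $s=p_0(1-p_0)=\sig'(\eta)$. Finite support gives a constant $M$ with $|\Delta|\le M$, so every Taylor remainder below is uniform over the support. Expanding $\sig$ gives $p(\Delta)=p_0+s\beta\Delta+O(\beta^2)$, and averaging gives $\bar p=p_0+s\beta\E\Delta+O(\beta^2)$. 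Subtracting,
\[
p(\Delta)-\bar p=s\beta(\Delta-\E\Delta)+O(\beta^2)=O(\beta).
\]

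Next I would expand the binary KL around $\bar p$:
\[
\mathrm{kl}(p\|\bar p)=\frac{(p-\bar p)^2}{2\bar p(1-\bar p)}+O(|p-\bar p|^3).
\]
The cubic remainder is uniform because $\bar p$ stays in a compact subset of $(0,1)$ for small $\beta$. Substituting,
\[
(p-\bar p)^2=s^2\beta^2(\Delta-\E\Delta)^2+O(\beta^3).
\]
Also $\bar p(1-\bar p)=s+O(\beta)$, so its reciprocal is $s^{-1}(1+O(\beta))$. Combining these gives
\[
\mathrm{kl}\bigl(p(\Delta)\|\bar p\bigr)=\tfrac12 s\beta^2(\Delta-\E\Delta)^2+O(\beta^3).
\]
Taking the expectation over the finite support yields $\MI(\Delta;C)=\frac12 s\beta^2\Var(\Delta)+O(\beta^3)$, which is \eqref{eq:mi-small-beta}.

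The main obstacle is the bookkeeping of error orders. The second-order term of $p-\bar p$ could in principle contribute to the $\beta^2$ coefficient, but it is $O(\beta^2)$ and enters the square only through a cross term with the leading $O(\beta)$ term, which is $O(\beta^3)$. Likewise, replacing $\bar p(1-\bar p)$ by $s$ costs only $O(\beta)\cdot O(\beta^2)$. Finite support is what makes all of these $O(\cdot)$ terms uniform, so no dominated-convergence argument is needed.
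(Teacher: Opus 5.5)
Your proof is correct and follows essentially the paper's route: the same representation $\MI(\Delta;C)=\E[\KL(\Bern(p(\Delta))\,\|\,\Bern(\bar p))]$, the same first-order expansion $p(\Delta)-\bar p=s\beta(\Delta-\E\Delta)+O(\beta^2)$, and a quadratic expansion of binary KL with remainders made uniform by finite support. The only cosmetic difference is the expansion point. You expand the KL around $\bar p$ and then replace $\bar p(1-\bar p)$ by $s+O(\beta)$. The paper instead expands around $q_0=\sig(\eta)$ directly, with remainder $O(|u-v|^3+|v-q_0||u-v|^2)$.
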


Thus a $50$--$50$ outcome has two distinct interpretations. It may mean the reward gap is close to zero, or it may mean the channel is low-transmission because $\beta$ is small. The entropy of the label alone cannot distinguish these cases.

\subsection{Fisher information and local non-identification}
\label{sec:fisher}

Proposition~\ref{prop:entropy-information} is a statement about a prior over the gap. The same distinction appears in local, estimation-theoretic form through Fisher information, and in that form it also exposes the mechanism behind the non-identification of Proposition~\ref{prop:nonidentification}. In the standard Bradley--Terry model, one label contains the most local information about the reward difference near a $50$--$50$ split. Under the attention-scaled channel, that maximum is multiplied by $\beta^2$.

\begin{proposition}[Fisher information collapse and rank-one information]
\label{prop:fisher}
For one comparison with
\begin{equation}
    C\sim \Bern(q),
    \qquad
    q=\sig(\eta+\beta\Delta),
    \label{eq:single-pair-fisher}
\end{equation}
the following hold.
\begin{enumerate}
    \item If $\eta$ and $\beta$ are known, the Fisher information in one label about $\Delta$ is
    \begin{equation}
        \mathcal I_\Delta(\Delta)=\beta^2 q(1-q)\le \frac{\beta^2}{4}.
        \label{eq:fisher-known-beta}
    \end{equation}
    \item If $\eta$ is known but $(\Delta,\beta)$ are both unknown, the Fisher information matrix for $(\Delta,\beta)$ is
    \begin{equation}
        \mathcal I_{(\Delta,\beta)}
        =q(1-q)
        \begin{pmatrix}
            \beta^2 & \beta\Delta \\
            \beta\Delta & \Delta^2
        \end{pmatrix}
        =q(1-q)
        \begin{pmatrix}\beta\\ \Delta\end{pmatrix}
        \begin{pmatrix}\beta & \Delta\end{pmatrix},
        \label{eq:fisher-rank-one}
    \end{equation}
    and therefore has rank at most one.
    \item If $\eta$, $\Delta$, and $\beta$ are all unknown, the Fisher information matrix for $(\eta,\Delta,\beta)$ is
    \begin{equation}
        \mathcal I_{(\eta,\Delta,\beta)}
        =q(1-q)
        \begin{pmatrix}1\\ \beta\\ \Delta\end{pmatrix}
        \begin{pmatrix}1 & \beta & \Delta\end{pmatrix},
        \label{eq:fisher-rank-one-eta}
    \end{equation}
    and again has rank at most one.
\end{enumerate}
When $\beta$ is constrained to be nonnegative, the usual regular Fisher-information interpretation for parameters involving $\beta$ applies at interior points $\beta>0$; at the boundary $\beta=0$, the displays are local derivative formulas for the logit index.
\end{proposition}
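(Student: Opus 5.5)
The plan is to compute everything from the single Bernoulli log-likelihood through the logit index. Write $s=\eta+\beta\Delta$, so that $q=\sig(s)$ and the log-likelihood of one label is
\begin{equation*}
    \log p(C;\theta)=C\log\sig(s)+(1-C)\log\sig(-s).
\end{equation*}
Here $\theta$ is whichever subvector of $(\eta,\Delta,\beta)$ is treated as unknown. The identity $\sig'(s)=\sig(s)\sig(-s)$ gives $\partial_s\log p=C-q$. By the chain rule, the score with respect to $\theta$ is therefore $(C-q)\nabla_\theta s$. Since $\E[(C-q)^2]=q(1-q)$ and $\nabla_\theta s$ is deterministic, the Fisher information is
\begin{equation*}
    \mathcal I_\theta=\E\bigl[(C-q)^2\bigr]\,\nabla_\theta s\,\nabla_\theta s^\top=q(1-q)\,\nabla_\theta s\,\nabla_\theta s^\top .
\end{equation*}
This outer-product form is the single structural fact behind all three parts of Proposition~\ref{prop:fisher}.

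The three parts then follow by reading off the gradient of $s=\eta+\beta\Delta$.
\begin{enumerate}
    \item For $\theta=\Delta$, we have $\partial_\Delta s=\beta$, so $\mathcal I_\Delta=\beta^2q(1-q)$. The bound $q(1-q)\le 1/4$, which is AM--GM or maximization at $q=1/2$, gives \eqref{eq:fisher-known-beta}.
    \item For $\theta=(\Delta,\beta)$, the gradient is $\nabla s=(\beta,\Delta)^\top$. This gives \eqref{eq:fisher-rank-one}, and an outer product $vv^\top$ has rank at most one: rank one if $v\neq0$ and zero otherwise.
    \item For $\theta=(\eta,\Delta,\beta)$, the gradient is $\nabla s=(1,\beta,\Delta)^\top$, which gives \eqref{eq:fisher-rank-one-eta}. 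This matrix has rank exactly one because the first coordinate is nonzero.
\end{enumerate}
I would remark that the rank deficiency is exactly the local form of Proposition~\ref{prop:nonidentification}. The null directions of the information matrix, for instance $(\Delta,-\beta)$ in part 2, are tangent to the level sets of $\beta\Delta$ along which the likelihood is flat.

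The only delicate point, and the step I expect to need care, is the final caveat about the constraint $\beta\ge0$. The formula $\mathcal I=\E[\text{score}\,\text{score}^\top]$ is the regular Fisher information only if the parameter is interior and the model is differentiable in quadratic mean there. Differentiability in quadratic mean is automatic here, since the Bernoulli likelihood is smooth in $s$ and $q\in(0,1)$. The constraint is the real issue. At $\beta>0$ the point is interior to the parameter set, and the usual interpretation, including Cramér--Rao and local asymptotic normality, applies. At $\beta=0$ the point lies on the boundary, so one-sided derivatives only give the stated matrices as derivative formulas for the logit index. I would make that explicit rather than claim a regular information interpretation at the boundary.
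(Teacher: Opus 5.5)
Your proposal is correct and matches the paper's proof. Both compute the score $C-q$ in the logit index $s=\eta+\beta\Delta$, use the chain rule to get $\mathcal I_\theta=q(1-q)\,\nabla_\theta s\,\nabla_\theta s^\top$, read off the three parts from $\nabla s$, and add the same caveat about the boundary $\beta=0$. Your two extra observations are also correct: part 3 has rank exactly one since $q(1-q)>0$, and the null direction $(\Delta,-\beta)$ gives the local link to Proposition~\ref{prop:nonidentification}.
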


The first claim gives a sample-complexity penalty, since low attention reduces per-label information about reward quadratically. The rank-one claims give a local version of Proposition~\ref{prop:nonidentification}. The likelihood identifies the log-odds index $\eta+\beta\Delta$, not the reward gap, the attention multiplier, and the default separately.

\subsection{Lower bounds for sign and ranking recovery}
\label{sec:kl-lower-bound}

Fisher information measures local difficulty; the operational question is global. How many labels does it take to answer the coarsest question the data could settle, namely which of the two candidates is better? The next result shows that ordinal recovery, too, is bought only with attended information. To isolate the effect, consider the clean channel with no default and known attention,
\begin{equation}
    C\sim \Bern(\sig(\beta\Delta)).
    \label{eq:clean-sign-channel}
\end{equation}
The task is to decide whether $\Delta=\delta$ or $\Delta=-\delta$ for a fixed $\delta>0$.

\begin{proposition}[KL lower bound for sign recovery]
\label{prop:sign-lower-bound}
Let $P_+$ be the distribution of one label under $\Delta=\delta$ and $P_-$ the distribution under $\Delta=-\delta$ in \eqref{eq:clean-sign-channel}. If $\beta=0$, then $P_+=P_-$ and no test can have worst-case error probability below $1/2$. If $\beta>0$, then
\begin{equation}
    \KL(P_+\|P_-)
    =\beta\delta\tanh\left(\frac{\beta\delta}{2}\right)
    \le \frac{(\beta\delta)^2}{2}.
    \label{eq:sign-kl}
\end{equation}
For any test based on $n$ independent labels and $\beta>0$, if the worst-case probability of sign error is at most $\alpha<1/2$, then necessarily
\begin{equation}
    n\ge
    \frac{2(1-2\alpha)^2}{\beta\delta\tanh(\beta\delta/2)}
    \ge
    \frac{4(1-2\alpha)^2}{\beta^2\delta^2}.
    \label{eq:sign-lower-bound}
\end{equation}
Consequently, without a positive lower bound on attention, no uniform finite-sample sign guarantee is possible.
\end{proposition}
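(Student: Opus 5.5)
The plan is a two-point (Le Cam) argument. The only model-specific ingredient is a closed form for the single-label KL divergence.

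\textbf{The case $\beta=0$.} Both $P_+$ and $P_-$ equal $\Bern(\sig(0))=\Bern(1/2)$, so the $n$-fold products coincide as well. For any test $\psi$,
$$P_+^{\otimes n}(\psi=-)+P_-^{\otimes n}(\psi=+)=P_+^{\otimes n}(\psi=-)+P_+^{\otimes n}(\psi=+)=1,$$
so the larger of the two error probabilities is at least $1/2$.

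\textbf{The KL identity.} Write $a=\beta\delta>0$ and $p=\sig(a)$. By the symmetry $\sig(-a)=1-p$, we have $P_+=\Bern(p)$ and $P_-=\Bern(1-p)$. Then
$$\KL(P_+\|P_-)=p\log\frac{p}{1-p}+(1-p)\log\frac{1-p}{p}=(2p-1)\,\logit(p)=(2\sig(a)-1)\,a .$$
The elementary identity $2\sig(a)-1=\tanh(a/2)$ turns this into $a\tanh(a/2)$. The upper bound $(\beta\delta)^2/2$ then follows from $\tanh x\le x$ for $x\ge0$.

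\textbf{The sample-size bound.} The labels are i.i.d., so KL tensorizes: $\KL(P_+^{\otimes n}\|P_-^{\otimes n})=nK$ with $K=a\tanh(a/2)$. For any test, the sum of the two error probabilities is at least $1-\TV(P_+^{\otimes n},P_-^{\otimes n})$. If the worst-case error is at most $\alpha$, this forces $\TV\ge 1-2\alpha$. Pinsker's inequality gives $\TV\le\sqrt{nK/2}$. Hence $(1-2\alpha)^2\le nK/2$, which is
$$n\ge \frac{2(1-2\alpha)^2}{\beta\delta\tanh(\beta\delta/2)}.$$
Substituting $K\le a^2/2$ yields the weaker bound $4(1-2\alpha)^2/(\beta^2\delta^2)$.

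\textbf{The final claim.} For fixed $\delta$ and $\alpha<1/2$, the right-hand side tends to infinity as $\beta\to0$. So no single $n$ can work uniformly over all $\beta>0$, and at $\beta=0$ no $n$ works at all.

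\textbf{Main obstacle.} Nothing here is deep. The step needing the most care is the reduction from worst-case error to total variation: the minimax error bound $\alpha$ must be converted into a bound on the \emph{sum} of the two errors, which is where the factor $1-2\alpha$ comes from. I would also check that randomized tests are covered, which they are, because the inequality (sum of errors) $\ge 1-\TV$ holds for any measurable $\psi$. If adaptivity were a concern, it does not arise here, because $\beta$ and $\delta$ are fixed across labels.
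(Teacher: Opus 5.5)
Your proposal is correct and follows the paper's proof essentially step for step. Both arguments use the closed form $(2p-1)\logit(p)=a\tanh(a/2)$ together with $\tanh(a/2)\le a/2$, tensorize the KL divergence over the $n$ i.i.d. labels, convert the worst-case error bound $\alpha$ into $\TV\ge 1-2\alpha$, and then apply Pinsker's inequality. Your explicit error-sum argument for $\beta=0$ and your check that randomized tests are covered are small details the paper leaves implicit.
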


The lower bound depends on the product $\beta\delta$. Bradley--Terry interprets a small product as a small reward gap. The attention-scaled model shows that it may instead be a large reward gap passing through a low-attention channel.

The same logic extends from one pair to many reward hypotheses. Let $\ThetaSet$ be a finite set of possible reward functions and suppose the learner observes labels $C_1,\ldots,C_n$, possibly under adaptively chosen queries. Let $H_{t-1}$ denote the history before label $t$, including previous labels, adaptively chosen queries, and any external randomization used by the query rule. Let $P_{\theta,t}(\cdot\mid h_{t-1})$ denote the conditional distribution of label $t$ under reward hypothesis $\theta$ and history $h_{t-1}$.

\begin{theorem}[Fano bound for attention-limited reward recovery]
\label{thm:fano}
Assume $\theta$ is uniform on a finite class $\ThetaSet$ with $|\ThetaSet|=M\ge2$. Suppose that for every time $t$, every history $h_{t-1}$ with positive probability under the joint mixture distribution induced by the uniform prior over $\ThetaSet$, and every pair $\theta,\theta'\in\ThetaSet$,
\begin{equation}
    \KL\bigl(P_{\theta,t}(\cdot\mid h_{t-1})\,\|\,P_{\theta',t}(\cdot\mid h_{t-1})\bigr)
    \le d_t.
    \label{eq:per-label-kl-bound}
\end{equation}
Then any estimator based on the observed history satisfies
\begin{equation}
    \Prb[\widehat\theta\ne\theta]
    \ge
    1-\frac{\sum_{t=1}^n d_t+\log 2}{\log M}.
    \label{eq:fano-bound}
\end{equation}
\end{theorem}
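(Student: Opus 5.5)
The plan is the standard Fano argument, with mutual information bounded through the chain rule along the adaptive history. Let $H_n$ denote the full observed history, including labels, queries and external randomization. Since $\widehat\theta$ is a function of $H_n$ and possibly independent randomness, Fano's inequality for $\theta$ uniform on $M$ points gives
\[
\Prb[\widehat\theta\ne\theta]\ge 1-\frac{\MI(\theta;H_n)+\log 2}{\log M}.
\]
It therefore suffices to show $\MI(\theta;H_n)\le\sum_{t=1}^n d_t$.

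For the chain rule, I would write $\MI(\theta;H_n)=\sum_{t=1}^n \MI(\theta;C_t,Q_t\mid H_{t-1})$, where $Q_t$ is the query at step $t$ together with any randomization drawn at that step. The query rule chooses $Q_t$ as a function of $H_{t-1}$ and randomness independent of $\theta$, so $\MI(\theta;Q_t\mid H_{t-1})=0$. The $t$-th term thus reduces to $\MI(\theta;C_t\mid H_{t-1},Q_t)$. Absorbing $Q_t$ into the history, this is the conditional mutual information between $\theta$ and a label whose conditional law is $P_{\theta,t}(\cdot\mid h_{t-1})$.

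Next, I bound each term by the pairwise KL. Fix a history $h$ of positive mixture probability. Then
\[
\MI(\theta;C_t\mid H_{t-1}=h)=\E_{\theta\mid h}\,\KL\bigl(P_{\theta,t}(\cdot\mid h)\,\|\,\bar P_{t}(\cdot\mid h)\bigr),
\qquad
\bar P_t(\cdot\mid h)=\E_{\theta'\mid h}P_{\theta',t}(\cdot\mid h).
\]
Here the posterior of $\theta$ given $h$ is used, not the uniform prior. By convexity of KL in its second argument, each summand is at most $\E_{\theta'\mid h}\KL(P_{\theta,t}\|P_{\theta',t})\le d_t$, using \eqref{eq:per-label-kl-bound}. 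The hypothesis is stated for every pair and every positive-probability history, which is exactly what makes the posterior weighting harmless. Averaging over $h$ gives $\MI(\theta;C_t\mid H_{t-1})\le d_t$, and summing over $t$ gives the claim.

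The main care point is the adaptivity bookkeeping. The query-selection step must contribute no information about $\theta$. Conditioning must be restricted to histories of positive probability under the mixture, which is precisely the set on which the hypothesis is imposed. Finally, the possibly unbounded label space needs no special handling here: labels are binary, so all the KL and mutual-information quantities are finite sums.
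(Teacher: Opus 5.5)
Your proposal is correct and follows the paper's proof. Both arguments use Fano's inequality to reduce the claim to $\MI(\theta;H_n)\le\sum_t d_t$, get that bound from the chain rule along the adaptive history (query selection contributes no information about $\theta$), and bound each conditional term by pairwise KL; your explicit posterior-weighted convexity step simply spells out what the paper calls ``the average pairwise KL divergence.''
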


In attention-scaled logistic channels, the constants $d_t$ are of order $\beta_t^2$ for small attended reward separations. The bound therefore says that reward recovery is limited by total attended information $\sum_t d_t$, not simply by the number of labels. Repeating low-attention comparisons can be much less valuable than collecting fewer high-attention comparisons.

\subsection{Potential information and cyclic information on comparison graphs}
\label{sec:hodge-information}

The bounds so far concern single pairs and finite hypothesis classes. Returning to the comparison graph closes the loop with Section~\ref{sec:limits}. The cycle obstruction, similarly, has an information-theoretic meaning, and the projection of Proposition~\ref{prop:local-projection} turns out to be the object that discards it. A scalar reward can represent only potential fields on the comparison graph. The part of the observed log-odds field orthogonal to all potentials is cyclic comparison information, present in the human comparisons but impossible to encode in any scalar reward. The decomposition below is the combinatorial Hodge decomposition of \citet{jiang2011statistical}, applied to the weighted log-odds field generated by the attention-scaled channel.

Let the weighted inner product on edge fields be
$$
    \langle a,b\rangle_W=a^\top Wb,
    \qquad
    \|a\|_W^2=a^\top Wa.
$$
Let $\calP=\operatorname{im}(B)$ be the potential subspace. Define
\begin{equation}
    r^H=(B^\top W B)^+B^\top W\ell,
    \qquad
    \ellpot=Br^H,
    \qquad
    \ellcyc=\ell-\ellpot.
    \label{eq:hodge-decomposition}
\end{equation}

\begin{proposition}[Cyclic information loss of scalar rewards]
\label{prop:hodge-kl}
Suppose $G$ is connected, $W=\diag(\rho_e)$ has positive diagonal entries, and $\|\ell\|_\infty\le\eps$. Then:
\begin{enumerate}
    \item $\ell=\ellpot+\ellcyc$ is the weighted orthogonal decomposition of the log-odds field into the potential subspace and its orthogonal complement: $B^\top W\ellcyc=0$ and
    \begin{equation}
        \|\ell\|_W^2=\|\ellpot\|_W^2+\|\ellcyc\|_W^2.
        \label{eq:hodge-pythagorean}
    \end{equation}
    \item The best scalar-reward approximation in weighted KL divergence satisfies
    \begin{align}
        &\inf_{r:\,\one^\top r=0}
        \sum_{e\in E}\rho_e\,
        \KL\Bigl(
        \Bern(\sig(\ell_e))\,\Big\|\,
        \Bern(\sig((Br)_e))
        \Bigr) 
        \notag\\
        &\hspace{5em}=
        \frac18\|\ellcyc\|_W^2+O(\eps^3),
        \label{eq:cyclic-kl-loss}
    \end{align}
    where the remainder is uniform over sufficiently small $\|\ell\|_\infty$.
\end{enumerate}
\end{proposition}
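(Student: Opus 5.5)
Part 1 is pure linear algebra in the weighted inner product, and part 2 follows from a second-order Taylor expansion of the Bernoulli KL divergence in the logit coordinates, combined with the projection characterization of Proposition~\ref{prop:local-projection}.

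For part 1, I would show that $\ellpot=Br^H$ is the $\langle\cdot,\cdot\rangle_W$-orthogonal projection of $\ell$ onto $\calP=\operatorname{im}(B)$. Since $G$ is connected, $\ker B=\operatorname{span}(\one)$, so $B^\top WB$ is invertible on $\one^\perp$. Moreover $B^\top W\ell\in\operatorname{im}(B^\top)=\one^\perp$. Hence $r^H$ solves the weighted normal equations $B^\top WBr^H=B^\top W\ell$, which gives $B^\top W\ellcyc=0$. Then $\langle Br,\ellcyc\rangle_W=r^\top B^\top W\ellcyc=0$ for every $r$, so $\ellcyc\perp_W\calP$. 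Expanding $\|\ellpot+\ellcyc\|_W^2$ yields \eqref{eq:hodge-pythagorean}.

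For part 2, I would write $f(a,b)=\KL(\Bern(\sig(a))\|\Bern(\sig(b)))$. Then $f(a,a)=0$ and $\partial_b f(a,b)=\sig(b)-\sig(a)$, so $\partial_b^2 f=\sig'(b)$. Expanding around $0$, where $\sig'(0)=1/4$ and $\sig''(0)=0$, gives $f(a,b)=\tfrac18(a-b)^2+O(\max(|a|,|b|)^4)$. The odd third-order terms cancel because $f(a,b)$ is even under $(a,b)\mapsto(-a,-b)$, but an $O(\eps^3)$ remainder is all that is required, so a crude Taylor bound suffices.

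The upper bound on the infimum is then immediate: take $r=r^H$. Since $\|r^H\|$ is bounded by a graph constant times $\eps$, we get $\|Br^H\|_\infty=O(\eps)$, and the weighted KL equals $\tfrac18\|\ell-Br^H\|_W^2+O(\eps^3)=\tfrac18\|\ellcyc\|_W^2+O(\eps^3)$.

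For the lower bound, the key observation is that the weighted KL equals minus the population objective $L(r;\ell)$ of \eqref{eq:population-bt-graph}, plus a term independent of $r$ (the weighted label entropy). So the infimum is attained exactly at the Bradley--Terry maximizer $r^\dagger(\ell)$. By Proposition~\ref{prop:local-projection}, $r^\dagger=r^H+O(\eps^3)$, and in particular $\|Br^\dagger\|_\infty=O(\eps)$, so the same Taylor expansion applies at $r^\dagger$. This gives a value of $\tfrac18\|\ell-Br^\dagger\|_W^2+O(\eps^3)$. Finally, part 1 gives $\|\ell-Br\|_W^2=\|\ellcyc\|_W^2+\|B(r^H-r)\|_W^2\ge\|\ellcyc\|_W^2$, which closes the bound.

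The main obstacle is keeping the remainders uniform, and in particular justifying that the minimizer lies in an $O(\eps)$ ball so the local expansion is legitimate. Proposition~\ref{prop:local-projection} supplies this directly. If one preferred not to rely on it, I would argue instead that outside a ball of radius $C\eps$ the strictly convex weighted KL exceeds its value at $r^H$, using a uniform lower bound on $\sig'$ over bounded arguments.
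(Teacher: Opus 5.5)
Your proposal is correct and follows essentially the same route as the paper. Part 1 uses the normal equations. For part 2, the weighted KL is identified with $-L(r;\ell)$ up to an $r$-independent constant, the minimizer is localized via Proposition~\ref{prop:local-projection}, and the uniform expansion $\frac18(u-v)^2+O(\eps^3)$ is applied. Your explicit sandwich (upper bound at $r^H$, lower bound at $r^\dagger$ combined with the Pythagorean identity) spells out the paper's one-line step replacing the KL infimum by $\frac18\inf_r\|\ell-Br\|_W^2$.
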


The coefficient $r^H$ in \eqref{eq:hodge-decomposition} is computed by the same operator as the pseudo-true target in \eqref{eq:projection-formula}. To first order, the attention-blind Bradley--Terry limit \emph{is} the potential component of the human log odds. Proposition~\ref{prop:hodge-kl} therefore prices what Proposition~\ref{prop:local-projection} discards. The cycle obstruction is not merely a consistency condition. It quantifies how much comparison information is irreducibly non-reward-like, and the cyclic energy measures the second-order information loss of compressing the human comparison channel to its potential component.

\section{Empirical Case Studies}
\label{sec:casestudy}

The theory leaves two observable footprints. The geometric footprint is cyclic energy in comparison fields, and the mechanistic footprint is that measurements of the evaluation process, such as response times and gaze, should carry information that labels do not. Two case studies check one footprint each.

\subsection{Cyclic comparison information in Arena votes}
\label{sec:arena}

The theory predicts that the log-odds field of real comparison data need not be a potential field, and Proposition~\ref{prop:hodge-kl} prices what any scalar score then discards. We test this signature on a canonical human preference dataset, $57{,}477$ pairwise battles between $64$ language models collected on Chatbot Arena \citep{chiang2024chatbot}.\footnote{Publicly available at \url{https://huggingface.co/datasets/lmarena-ai/arena-human-preference-55k}.} Items are models, edges are model pairs, and labels are human votes, so the empirical vote frequencies provide exactly the weight matrix $W$ of Section~\ref{sec:limits}. We drop ties ($31\%$ of battles), keep pairs with at least $100$ decisive votes, and restrict to the largest connected component, which leaves $m=32$ models, $74$ pairs, $15{,}001$ votes, and a cycle space of dimension $74-31=43$. Edge log odds use the Haldane--Anscombe estimate $\hat q_e=(w_e+\tfrac12)/(n_e+1)$.

Sampling noise alone creates cyclic energy, so significance is judged against a parametric bootstrap null. We fit Bradley--Terry by weighted maximum likelihood, regenerate binomial votes at the observed $n_e$ two thousand times, and recompute the cyclic energy share of each regenerated field. The observed share is $4.2\%$, against a null mean of $2.5\%$ and a null $95$th percentile of $3.5\%$, for a bootstrap $p$-value of $0.008$ (Figure~\ref{fig:arena}); the likelihood-ratio statistic gives the same verdict ($70.6$ on $43$ degrees of freedom, bootstrap $p=0.007$). The hypothesis that some scalar score generates these win rates is rejected at the one percent level.

The misfit is small but priced accurately. The best scalar fit loses $0.0034$ bits per comparison in weighted KL, and the prediction $\tfrac18\|\hat\ell^{\mathrm{cyc}}\|_W^2$ of Proposition~\ref{prop:hodge-kl} is $0.0038$ bits, a ratio of $0.89$, even though the largest observed log odds is near $1.8$ and the proposition's small-$\eps$ hypothesis is far from satisfied. Part of the raw loss reflects sampling noise; the deviance in excess of its degrees of freedom gives a rough noise-corrected estimate of the population-level misfit near $0.0013$ bits per comparison. Individual triangle circulations reach at most $z\approx3.0$, which is unremarkable given the number of triangles examined, so the evidence lies in the global excess rather than in any single cycle; we note, without attaching significance, that the top-ranked triangles involve closely related model versions such as claude-2.1, gpt-4-0314, and gpt-4-1106-preview.

The excess cyclic energy survives vote thresholds of $50$ and $200$ per pair (bootstrap $p=0.046$ and $0.008$). Counting ties as half wins shrinks every log odds toward zero and weakens the excess, to $p=0.06$ on the same graph and to insignificance when the threshold also admits sparsely compared pairs, consistent with dilution by a large mass of uninformative labels. Two caveats bound the interpretation. The votes aggregate over prompts and annotators, so cyclic energy can also arise from aggregation rather than per-query attention, and the analysis conditions on the pairs the platform chose to sample. What the case study establishes is a statistical rejection of every scalar score for this comparison data, at a misfit priced well by the one-eighth law; the finding is consistent with, though not unique evidence for, the attention mechanism. A self-contained script reproduces all numbers.

\begin{figure}[t]
\centering
\includegraphics[width=0.94\linewidth]{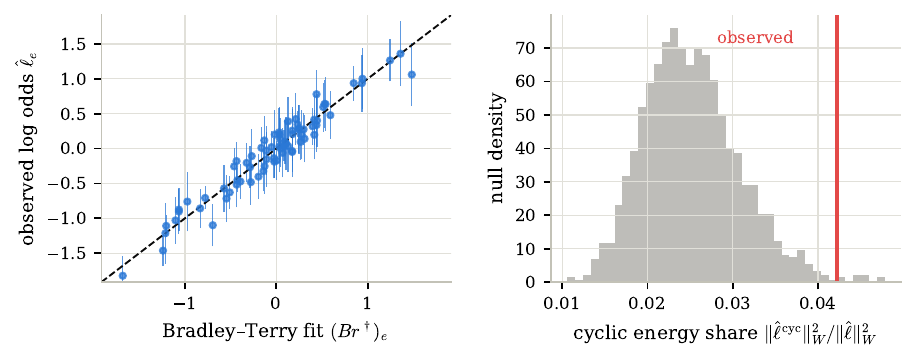}
\caption{Cyclic comparison information in Chatbot Arena votes ($32$ models, $74$ pairs, $15{,}001$ decisive votes). \emph{Left:} observed log odds against the best Bradley--Terry fit, with $\pm2$ standard errors per pair. \emph{Right:} observed cyclic energy share against its sampling-noise null under the fitted Bradley--Terry model ($2{,}000$ parametric bootstrap replicates); the observed share exceeds the null $95$th percentile ($p=0.008$).}
\label{fig:arena}
\end{figure}

\subsection{Response times carry information that labels do not}
\label{sec:attention}

The second footprint concerns the channel itself, and it needs data in which the deliberative gap and the evaluator's information acquisition are both measured. The perceptual comparison dataset of \citet{tavares2017attentional} provides both.\footnote{Distributed with the aDDM toolbox, publicly available at \url{https://github.com/goptavares/aDDM-Toolbox}.} Twenty-five participants made $31{,}854$ binary comparisons between two rotated bars, choosing the one closer to a target orientation, with angular distances set by the experimenter; the quality gap $\Delta\in\{-3,\ldots,3\}$ is therefore ground truth rather than a rating proxy, and every trial records the choice, the response time, and gaze fixations. Pooled choices follow a logistic in $\Delta$ with slope $1.14$, and per-participant slopes range from $0.45$ to $2.70$, a factor of six (Figure~\ref{fig:attention}, left). The same objective gap passes through channels of very different gain across evaluators. The reduced form absorbs attention, effort, and acuity alike into $\beta$, so this is heterogeneous $\beta$ measured directly, whatever its source; and because preference datasets pool annotators over different pairs, evaluator-level heterogeneity of this size induces the edge-level heterogeneity that generates cyclic energy of the kind found in Section~\ref{sec:arena}.

The headline is an information decomposition. Because the design is symmetric, a single label reveals which item is better but almost nothing about by how much; empirically the label carries $0.33$ bits about the sign of the gap and $0.0001$ bits about its magnitude. Response time carries $0.035$ bits per trial about the magnitude (permutation-corrected, $p<0.002$), while the label carries essentially none; mean response time falls from $2.2$ seconds on ties to $1.4$ seconds at the largest gap (Figure~\ref{fig:attention}, right). Repeated labels do not close this gap, since they reveal only the confounded index $\beta\Delta$ of Proposition~\ref{prop:fisher}, while response time gives a reading from outside the label channel. The quantity an annotation protocol needs in order to separate a near-tie from a large-but-unresolved gap is absent from the label and present in the response time, which is exactly the measurement the theory recommends collecting.

Gaze is associated with an additive shift of the kind the default $\eta$ describes. At fixed $\Delta$, one second of relative dwell toward an item is associated with a $0.91$ increase in its choice log odds (standard error $0.03$), about as much as a full quality level, consistent with the gaze bias documented by \citet{krajbich2010visual}; the association is observational, and gaze may follow an emerging choice as well as shape it. Two further caveats apply. The task is perceptual rather than preferential, and response time is endogenous to difficulty, so these are the channel's ingredients observed in a controlled comparison task rather than in RLHF annotation itself. Annotation platforms typically record decision latency and could release it; the analysis here is what that would enable. 
% A self-contained script reproduces all numbers.

\begin{figure}[t]
\centering
\includegraphics[width=0.94\linewidth]{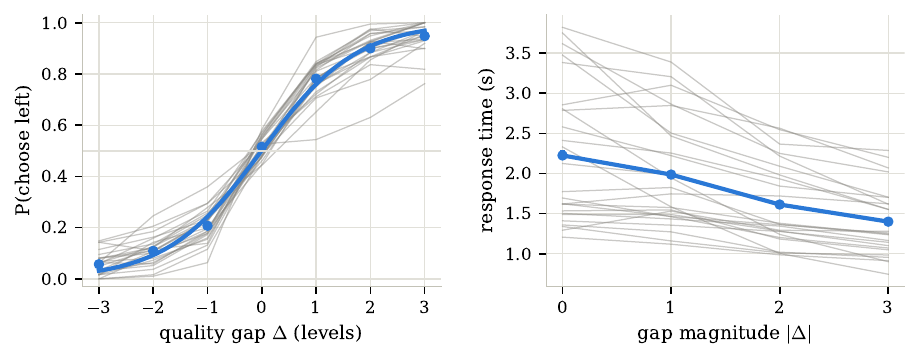}
\caption{Choices and response times in a perceptual comparison task \citep{tavares2017attentional}; $25$ participants, $31{,}854$ trials. \emph{Left:} psychometric curves per participant (gray) and pooled (blue); slopes vary by a factor of six across evaluators. \emph{Right:} mean response time against gap magnitude, per participant and pooled with $95\%$ intervals. The label carries $0.33$ bits about the sign of the gap and $0.0001$ bits about its magnitude, while response time carries $0.035$ bits about the magnitude.}
\label{fig:attention}
\end{figure}

\section{Conclusion}
\label{sec:conclusion}

Reward learning from pairwise comparisons changes qualitatively when humans are rationally inattentive. The reduced-form attention-scaled channel motivated by Shannon rational inattention does not simply add noise to reward differences; it scales them by query-dependent attention and adds defaults that do not scale with reward. Raw comparison log odds therefore need not form a potential field, the Bradley--Terry pseudo-true reward can reverse the deliberative ranking, and reward is not separately identified from attention or defaults. The information-theoretic analysis explains why these are not mere modeling inconveniences. A near $50$--$50$ label may reflect true closeness or an attention bottleneck, per-label information about reward scales as $\beta^2$, unknown attention makes the local information matrix rank one, and KL and Fano bounds show that sign, ranking, and reward recovery require total attended information rather than more labels. On graphs, the Hodge decomposition prices the cyclic comparison information that no scalar reward can represent. The case studies find that signature in Chatbot Arena votes, with the one-eighth law pricing the misfit accurately, and show in a perceptual comparison task that response times and gaze carry information about the evaluation process that labels do not.

The main implication for alignment practice is that reward learning under bounded attention is measurement under an endogenous information constraint, not curve fitting. Annotation protocols should treat weak pairwise signals as ambiguous between true indifference and failed evaluation, and should be especially suspicious of near-even splits on safety-relevant pairs, where the hard-because-hidden interpretation is most plausible. Our bounds also give scalable-oversight interventions \citep{leike2018scalable,irving2018ai,bowman2022measuring} a precise target. Such interventions help insofar as they raise the attention multiplier on the comparisons that matter, since no amount of passive relabeling can substitute for attended information. Natural next steps include oversight interventions that demonstrably shift attention, response-time or deliberation-effort measurement as proxies for information acquisition, partial-identification bounds under explicit attention constraints, and extensions of the graph information decomposition to function approximation over contexts and trajectories.

\bibliographystyle{plainnat}
\bibliography{references}

\clearpage
\appendix

\section{Proofs}
\label{app:proofs}

\subsection{Proof of Lemma~\ref{lem:ri-logit}}

\begin{proof}
Fix $z$ and suppress the query subscript. Write $u(a,\omega)=u_z(a,\omega)$, $\mu=\mu_z$, $\kappa=\kappa_z$, and $\pi=\pi_z$. Because the optimum is interior, $\pi(a\mid\omega)>0$ and $\bar\pi(a)>0$ for both actions. The mutual information can be written as
$$
    \MI(\omega;a)
    =\sum_{\omega,a}\mu(\omega)\pi(a\mid\omega)\log\pi(a\mid\omega)
    -\sum_a \bar\pi(a)\log\bar\pi(a).
$$
For each $a$ and $\omega$,
$$
    \frac{\partial \MI}{\partial \pi(a\mid\omega)}
    =\mu(\omega)\{\log\pi(a\mid\omega)+1\}
    -\mu(\omega)\{\log\bar\pi(a)+1\}
    =\mu(\omega)\log\frac{\pi(a\mid\omega)}{\bar\pi(a)}.
$$
The Lagrangian for the simplex constraints $\sum_a\pi(a\mid\omega)=1$ is
$$
    \mathcal{L}(\pi,\lambda)
    =\sum_{\omega,a}\mu(\omega)\pi(a\mid\omega)u(a,\omega)
    -\kappa\MI(\omega;a)
    +\sum_\omega\lambda_\omega\left(\sum_a\pi(a\mid\omega)-1\right).
$$
The first-order condition for an interior optimum is
$$
    \mu(\omega)u(a,\omega)
    -\kappa\mu(\omega)\log\frac{\pi(a\mid\omega)}{\bar\pi(a)}
    +\lambda_\omega=0.
$$
Since $\mu(\omega)>0$, this is equivalent to
$$
    \pi(a\mid\omega)=\bar\pi(a)\exp\{u(a,\omega)/\kappa+m(\omega)\},
$$
where $m(\omega)=\lambda_\omega/(\kappa\mu(\omega))$ is independent of $a$. Enforcing $\sum_a\pi(a\mid\omega)=1$ determines the normalizing factor and gives Equation~\eqref{eq:generalized-logit}. Taking the ratio between actions $1$ and $0$ gives Equation~\eqref{eq:conditional-logodds} with $\alpha_z=\log\{\bar\pi_z(1)/\bar\pi_z(0)\}$ and $\beta_z=1/\kappa_z$.
\end{proof}

\subsection{Proof of Proposition~\ref{prop:cycle}}

\begin{proof}
If $\ell=Br$, then the sum of $\ell$ around any directed cycle telescopes:
$$
    \sum_{k=1}^K \ell_{i_k i_{k+1}}
    =\sum_{k=1}^K(r_{i_k}-r_{i_{k+1}})=0.
$$
Conversely, assume all directed cycle sums are zero. Fix a reference vertex $v_0$. For any vertex $i$, choose a path $P_i$ from $i$ to $v_0$ and define $r_i$ as the signed sum of $\ell$ along that path, with an edge contributing $\ell_{ab}$ when traversed in its chosen orientation $(a,b)$ and $-\ell_{ab}$ when traversed in reverse. If two paths from $i$ to $v_0$ produced different sums, traversing one path and then the reverse of the other would give a closed walk with nonzero total circulation. Removing repeated vertices decomposes that closed walk into simple cycles, at least one of which would have nonzero circulation, contradicting the assumption. Hence $r_i$ is well-defined.

For an oriented edge $e=(i,j)$, compare the path from $i$ to $v_0$ with the path that first traverses $e$ from $i$ to $j$ and then follows the chosen path from $j$ to $v_0$. Path independence gives $r_i=\ell_{ij}+r_j$, so $r_i-r_j=\ell_{ij}$. Thus $\ell=Br$. If $r$ and $r'$ both represent $\ell$, then $r_i-r_j=r'_i-r'_j$ on every edge, so $r-r'$ is constant on every connected component. Since $G$ is connected, the difference is a global additive constant.
\end{proof}

\subsection{Proof of Corollary~\ref{cor:ri-representable}}

\begin{proof}
The first sentence follows by substituting $\ell_{ij}=\eta_{ij}+\beta_{ij}(R_i^*-R_j^*)$ into Proposition~\ref{prop:cycle}. If $\eta_{ij}=0$ and $\beta_{ij}=\beta$, then $\ell_{ij}=\beta R_i^*-\beta R_j^*$, so $r_i=\beta R_i^*$ represents the field. If $\eta_{ij}=b_i-b_j$ and $\beta_{ij}=\beta$, then
$$
    \ell_{ij}=(\beta R_i^*+b_i)-(\beta R_j^*+b_j),
$$
so $r_i=\beta R_i^*+b_i$ represents the field.
\end{proof}

\subsection{Proof of Proposition~\ref{prop:local-projection}}

\begin{proof}
The objective in \eqref{eq:population-bt-graph} is continuous and concave in $r$. It is coercive on the normalized subspace $\one^\top r=0$: if $\|r\|\to\infty$ with $\one^\top r=0$, then, because $B$ is injective on $\one^\perp$ in finite dimensions, $\|Br\|_\infty\to\infty$. Since $\sig(\ell_e)\in(0,1)$ for every finite $\ell_e$, the Bernoulli logistic term on any edge whose fitted logit diverges tends to $-\infty$, while the remaining edge terms are nonpositive. Hence $L(r;\ell)\to-\infty$ along diverging normalized sequences, so a maximizer exists. On $\one^\perp$, the objective is strictly concave because $G$ is connected and all $\rho_e$ are positive: if $Br\ne Br'$ on some edge, strict concavity of the Bernoulli log-likelihood is strict along that edge, and if $Br=Br'$ then $r-r'$ is constant and therefore zero on the normalized subspace. Hence the maximizer is unique.

The first-order condition is
\begin{equation}
    B^\top W\{\sig(\ell)-\sig(Br)\}=0.
    \label{eq:graph-foc-proof}
\end{equation}
At $\ell=0$, the unique normalized solution is $r=0$. The derivative of the left-hand side of \eqref{eq:graph-foc-proof} with respect to $r$ at $(\ell,r)=(0,0)$ is $-(1/4)B^\top W B$, which is nonsingular on $\one^\perp$ because $G$ is connected and $W$ has positive diagonal entries. The implicit-function theorem therefore gives a smooth map $r^\dagger(\ell)$ in a neighborhood of zero, with $r^\dagger(0)=0$ and $\|r^\dagger(\ell)\|=O(\|\ell\|)$.

For $|t|$ small, $\sig(t)=1/2+t/4+O(t^3)$; the quadratic term is absent because $\sig(t)-1/2$ is odd. Since $\|\ell\|_\infty\le\eps$ and $\|Br^\dagger(\ell)\|_\infty=O(\eps)$, substituting this expansion into \eqref{eq:graph-foc-proof} yields
$$
    B^\top W\left\{\frac{\ell-Br^\dagger(\ell)}{4}\right\}=O(\eps^3).
$$
Multiplying by $4$ and solving on $\one^\perp$ gives
$$
    r^\dagger(\ell)=(B^\top W B)^+B^\top W\ell+O(\eps^3),
$$
with a uniform remainder on a sufficiently small neighborhood of zero for the fixed finite graph and fixed positive weights.
\end{proof}

\subsection{Derivation for Example~\ref{ex:projection-reversal}}

\begin{proof}
Let $r_A=x$, $r_B=y$, and $r_C=0$. With equal edge weights, the Bradley--Terry fitted probabilities are
$$
    p_{AB}=\sig(x-y),\qquad p_{AC}=\sig(x),\qquad p_{BC}=\sig(y).
$$
The true probabilities corresponding to \eqref{eq:example-logodds} are
$$
    q_{AB}=\sig(\eps),\qquad q_{AC}=\sig(\eps),\qquad q_{BC}=\sig(5\eps).
$$
The first-order conditions for $x$ and $y$ are
\begin{align*}
    (q_{AB}-p_{AB})+(q_{AC}-p_{AC})&=0,\\
    -(q_{AB}-p_{AB})+(q_{BC}-p_{BC})&=0.
\end{align*}
Using $\sig(t)=1/2+t/4+O(t^3)$ and the smoothness of the optimum from Proposition~\ref{prop:local-projection}, the first-order terms satisfy
$$
    2x-y=2\eps,
    \qquad
    2y-x=4\eps.
$$
Solving gives $x=8\eps/3$ and $y=10\eps/3$. The omitted terms are $O(\eps^3)$ by the same Taylor expansion and implicit-function argument used in Proposition~\ref{prop:local-projection}.
\end{proof}

\subsection{Proof of Proposition~\ref{prop:nonidentification}}

\begin{proof}
For arbitrary $v$ and nonnegative multipliers $\beta_{ij}$, define $\eta_{ij}$ by \eqref{eq:eta-rationalization}. Then
$$
    \sig\{\eta_{ij}+\beta_{ij}(v_i-v_j)\}
    =\sig(\ell_{ij})=q_{ij},
$$
which proves the first claim. For the second claim, first consider a compared pair with $v_i\ne v_j$. The condition $\ell_{ij}\,(v_i-v_j)\ge0$ makes \eqref{eq:beta-rationalization} nonnegative, and it satisfies
$$
    \sig\{\beta_{ij}(v_i-v_j)\}=\sig(\ell_{ij})=q_{ij}.
$$
If $v_i=v_j$, the stated condition gives $\ell_{ij}=0$, so setting, for example, $\beta_{ij}=0$ yields $\sig\{\beta_{ij}(v_i-v_j)\}=\sig(0)=q_{ij}$. Thus all compared pairs are rationalized with zero defaults.
\end{proof}

\subsection{Proof of Proposition~\ref{prop:entropy-information}}

\begin{proof}
If $\beta=0$, then $\Prb[C=1\mid\Delta]=\sig(\eta)$ for every value of $\Delta$, so the conditional distribution of $C$ is independent of $\Delta$ and $\MI(\Delta;C)=0$. If also $\eta=0$, then $\sig(\eta)=1/2$, so $C$ is a fair coin and $\Ent(C)=\log2$.

For the expansion, let $q_0=\sig(\eta)$ and $s=q_0(1-q_0)$. Because the prior has finite support, all Taylor remainders below are uniform over the support of $\Delta$. Write $q_\beta(\Delta)=\sig(\eta+\beta\Delta)$. Then
$$
    q_\beta(\Delta)=q_0+s\beta\Delta+O(\beta^2),
    \qquad
    \bar q_\beta:=\E[q_\beta(\Delta)]=q_0+s\beta\E[\Delta]+O(\beta^2).
$$
The mutual information is
$$
    \MI(\Delta;C)
    =\E\left[\KL\bigl(\Bern(q_\beta(\Delta))\,\|\,\Bern(\bar q_\beta)\bigr)\right].
$$
For Bernoulli parameters $u$ and $v$ in a compact subinterval of $(0,1)$,
$$
    \KL(\Bern(u)\|\Bern(v))
    =\frac{(u-v)^2}{2q_0(1-q_0)}+O(|u-v|^3+|v-q_0||u-v|^2).
$$
Here
$$
    q_\beta(\Delta)-\bar q_\beta=s\beta(\Delta-\E\Delta)+O(\beta^2),
$$
so
$$
    \MI(\Delta;C)
    =\frac{1}{2s}\E\left[s^2\beta^2(\Delta-\E\Delta)^2\right]+O(\beta^3)
    =\frac12 s\beta^2\Var(\Delta)+O(\beta^3),
$$
which is \eqref{eq:mi-small-beta}.
\end{proof}

\subsection{Proof of Proposition~\ref{prop:fisher}}

\begin{proof}
For a Bernoulli observation with parameter $q=\sig(s)$ and scalar index $s$, the score with respect to $s$ is $C-q$ and the Fisher information for $s$ is $q(1-q)$. By the chain rule, for any parameter vector $\theta$ entering only through $s(\theta)$,
$$
    \mathcal I_\theta=q(1-q)\,\nabla s(\theta)\nabla s(\theta)^\top.
$$
If $s=\eta+\beta\Delta$ and $\eta,\beta$ are known, then $\partial s/\partial\Delta=\beta$, giving \eqref{eq:fisher-known-beta}. Since $q(1-q)\le1/4$, the upper bound follows. If $\eta$ is known and $\theta=(\Delta,\beta)$, then $\nabla s=(\beta,\Delta)^\top$, giving \eqref{eq:fisher-rank-one}. If $\theta=(\eta,\Delta,\beta)$, then $\nabla s=(1,\beta,\Delta)^\top$, giving \eqref{eq:fisher-rank-one-eta}. Each displayed matrix is an outer product times the positive scalar $q(1-q)$, so its rank is at most one. If the model constrains $\beta\ge0$, the regular Fisher-information interpretation for the parameters involving $\beta$ is at interior values $\beta>0$; the same displayed derivatives remain the local logit-index derivatives at the boundary.
\end{proof}

\subsection{Proof of Proposition~\ref{prop:sign-lower-bound}}

\begin{proof}
If $\beta=0$, then both hypotheses generate $\Bern(1/2)$ labels, so $P_+=P_-$ and every test has worst-case error probability at least $1/2$. Now assume $\beta>0$. Let $a=\beta\delta$ and $q=\sig(a)$. Under $\Delta=\delta$, one label is $\Bern(q)$; under $\Delta=-\delta$, one label is $\Bern(1-q)$. Thus
\begin{align*}
    \KL(P_+\|P_-)
    &=q\log\frac{q}{1-q}+(1-q)\log\frac{1-q}{q} \\
    &=(2q-1)\log\frac{q}{1-q}
      =a\tanh(a/2).
\end{align*}
Because $\tanh(a/2)\le a/2$ for $a\ge0$, the upper bound in \eqref{eq:sign-kl} follows.

For $n$ independent labels, product additivity gives
$$
    \KL(P_+^{\otimes n}\|P_-^{\otimes n})=n\KL(P_+\|P_-).
$$
Let $\widehat s\in\{+,-\}$ be any test and let $\alpha$ bound its worst-case error probability. The total variation distance between the two product distributions must satisfy
$$
    \TV(P_+^{\otimes n},P_-^{\otimes n})\ge 1-2\alpha,
$$
because the optimal testing error equals $(1-\TV)/2$ and no test can outperform the optimal test. Pinsker's inequality gives
$$
    \TV(P_+^{\otimes n},P_-^{\otimes n})
    \le \sqrt{\frac12 n\KL(P_+\|P_-)}.
$$
Combining the two displays yields
$$
    n\KL(P_+\|P_-)
    \ge 2(1-2\alpha)^2,
$$
which gives the first lower bound in \eqref{eq:sign-lower-bound}. Substituting $\KL(P_+\|P_-)\le\beta^2\delta^2/2$ gives the second. If $\beta$ can be arbitrarily close to zero, the necessary sample size can be made arbitrarily large.
\end{proof}

\subsection{Proof of Theorem~\ref{thm:fano}}

\begin{proof}
Let $H_{t-1}$ denote the history before label $t$, including past labels, chosen queries, and any external randomization used by the adaptive query rule. Query choices and external randomization add no information about $\theta$ except through past labels because the randomization is independent of $\theta$ conditional on the past. Thus the chain rule for mutual information gives
$$
    \MI(\theta;H_n)
    =\sum_{t=1}^n \MI(\theta;C_t\mid H_{t-1}).
$$
For any fixed history $h_{t-1}$, the conditional mutual information between a discrete parameter and one observation is bounded by the average pairwise KL divergence among the conditional observation laws, and hence by their maximum. Assumption~\eqref{eq:per-label-kl-bound} therefore implies
$$
    \MI(\theta;C_t\mid H_{t-1}=h_{t-1})\le d_t
$$
for every history with positive probability under the joint mixture distribution. Taking expectations over histories and summing gives
$$
    \MI(\theta;H_n)\le \sum_{t=1}^n d_t.
$$
Fano's inequality for a uniform parameter on $M$ hypotheses states that any estimator based on the observed history satisfies
$$
    \Prb[\widehat\theta\ne\theta]
    \ge 1-\frac{\MI(\theta;H_n)+\log2}{\log M}.
$$
Combining the two displays proves \eqref{eq:fano-bound}.
\end{proof}

\subsection{Proof of Proposition~\ref{prop:hodge-kl}}

\begin{proof}
The vector $r^H$ in \eqref{eq:hodge-decomposition} is the weighted least-squares projection coefficient of $\ell$ onto $\operatorname{im}(B)$ under the normalization orthogonal to constants. The normal equations are
$$
    B^\top W(\ell-Br^H)=0,
$$
which is $B^\top W\ellcyc=0$. Hence $\ellpot\in\calP$ and $\ellcyc\in\calP^{\perp_W}$, proving the weighted orthogonal decomposition and the Pythagorean identity \eqref{eq:hodge-pythagorean}.

For the KL statement, first note that $r^H=O(\eps)$ because it is a fixed linear map applied to $\ell$. The weighted KL objective differs from $-L(r;\ell)$ in \eqref{eq:population-bt-graph} only by the constant
$$
    \sum_e\rho_e\{\sig(\ell_e)\log\sig(\ell_e)+(1-\sig(\ell_e))\log\sig(-\ell_e)\},
$$
which is independent of $r$. Hence its centered minimizer is the same as the centered maximizer in Proposition~\ref{prop:local-projection}, and that proposition localizes the minimizer in an $O(\eps)$ neighborhood of zero. For $|u|,|v|\le C\eps$, a Taylor expansion of Bernoulli KL around $(u,v)=(0,0)$ in log-odds coordinates gives
\begin{equation}
    \KL\bigl(\Bern(\sig(u))\,\|\,\Bern(\sig(v))\bigr)
    =\frac18(u-v)^2+O(\eps^3),
    \label{eq:bernoulli-kl-logit-expansion}
\end{equation}
with a uniform remainder. Therefore
\begin{align*}
    \inf_{r:\,\one^\top r=0}
    \sum_e \rho_e
    \KL\bigl(\Bern(\sig(\ell_e))\,\|\,\Bern(\sig((Br)_e))\bigr)
    &=\frac18\inf_{r:\,\one^\top r=0}\|\ell-Br\|_W^2+O(\eps^3)\\
    &=\frac18\|\ellcyc\|_W^2+O(\eps^3),
\end{align*}
where the last equality is the definition of the weighted projection residual.
\end{proof}

\end{document}